\setlist[itemize]{noitemsep, topsep=0pt, leftmargin=11pt}
\theoremstyle{plain}
\newtheorem{theorem}{Theorem}[section]
\newtheorem{proposition}[theorem]{Proposition}
\newtheorem{lemma}[theorem]{Lemma}
\newtheorem{corollary}[theorem]{Corollary}
\theoremstyle{definition}
\newtheorem{definition}[theorem]{Definition}
\newtheorem{assumption}[theorem]{Assumption}
\theoremstyle{remark}
\newtheorem{remark}[theorem]{Remark}
\def\E{\mathbb{E}}
\def\1{\mathbf{1}}
\def\P{\mathbb{P}}
\newcommand{\mc}[1]{\mathcal{#1}}
\def\imagetop#1{\begin{tabular}{l}#1\end{tabular}}
\DeclareMathOperator*{\argmin}{arg\,min}
\DeclareMathOperator*{\simp}{\mathbf{\Delta}}
\begin{document}

\twocolumn[
\icmltitle{GALAXY: Graph-based Active Learning at the Extreme}



\icmlsetsymbol{equal}{*}

\begin{icmlauthorlist}
\icmlauthor{Jifan Zhang}{wisc}
\icmlauthor{Julian Katz-Samuels}{wisc}
\icmlauthor{Robert Nowak}{wisc}
\end{icmlauthorlist}

\icmlaffiliation{wisc}{University of Wisconsin, Madison, USA}

\icmlcorrespondingauthor{Jifan Zhang}{jifan@cs.wisc.edu}

\icmlkeywords{Active Learning, Deep Learning}

\vskip 0.3in
]



\printAffiliationsAndNotice{}  

\begin{abstract}
Active learning is a label-efficient approach to train highly effective models while interactively selecting only small subsets of unlabeled data for labelling and training. In ``open world" settings, the classes of interest can make up a small fraction of the overall dataset -- most of the data may be viewed as an out-of-distribution or irrelevant class.  This leads to extreme class-imbalance, and our theory and methods focus on this core issue.  We propose a new strategy for active learning called \texttt{\texttt{GALAXY}} (\textbf{G}raph-based \textbf{A}ctive \textbf{L}earning \textbf{A}t the e\textbf{X}tr\textbf{E}me), which blends ideas from graph-based active learning and deep learning.  \texttt{GALAXY} automatically and adaptively selects more class-balanced examples for labeling than most other methods for active learning. Our theory shows that \texttt{GALAXY} performs a refined form of uncertainty sampling that gathers a much more class-balanced dataset than vanilla uncertainty sampling.  Experimentally, we demonstrate \texttt{GALAXY}'s superiority over existing state-of-art deep active learning algorithms in unbalanced vision classification settings generated from popular datasets.

\end{abstract}

\section{Introduction}

Training deep learning systems can require enormous amounts of labeled data.  Active learning aims to reduce this burden by sequentially and adaptively selecting examples for labeling, with the goal of obtaining a relatively small dataset of especially informative examples. The most common approach to active learning is \emph{uncertainty sampling}.  The idea is to train a model based on an initial set of labeled data and then to select unlabeled examples that the model cannot classify with certainty. These examples are then labeled, the model is re-trained using them, and the process is repeated. 
Uncertainty sampling and its variants can work well when the classes are balanced.  
However, in many applications datasets may be very unbalanced, containing very rare classes or one very large majority class. As an example, suppose an insurance company would like to train an image-based machine learning system to classify various types of damage to the roofs of buildings \citep{conathan2018active}. It has a large corpus of unlabeled roof images, but the vast majority contain no damage of any sort.

Unfortunately, under extreme class imbalance, uncertainty sampling tends to select examples mostly from the dominant class(es), often leading to very slow learning. In this paper, we take a novel approach specifically targeting the class imbalance problem.  Our method is guaranteed to select examples that are both \emph{uncertain} and \emph{class-diverse}; i.e., the selected examples are relatively balanced across the classes even if the overall dataset is extremely unbalanced. In a nutshell, our algorithm sorts the examples by their softmax uncertainty scores and applies a bisection procedure to find consecutive pairs of points with differing labels. This procedure encourages finding uncertain points from a diverse set of classes. In contrast,  uncertainty sampling focuses on sampling around the model's decision boundary and therefore will collect a biased sample if this model decision boundary is strongly skewed towards one class. Figure \ref{fig:intro} displays the results of one of our experiments, showing that our proposed \texttt{GALAXY} algorithm  learns much more rapidly and collects a significantly more diverse dataset than uncertainty sampling. 

We make the following contributions in this paper:
\begin{itemize}
    \item we develop a novel, scalable algorithm \texttt{GALAXY}, tailored to the extreme class imbalance setting, which is frequently encountered in practice,
    \item \texttt{GALAXY} is easy to implement, requiring relatively simple modifications to commonplace uncertainty sampling approaches,
    \item we conduct extensive experiments showing that \texttt{GALAXY} outperforms a wide collection of deep active learning algorithms in the imbalanced settings, and 
    \item we give a theoretical analysis showing that \texttt{GALAXY} selects much more class-balanced batches of uncertain examples than traditional uncertainty sampling strategies.
\end{itemize}

\begin{figure}
    \centering
    \begin{subfigure}[t]{.49\linewidth}
        \includegraphics[width=\linewidth]{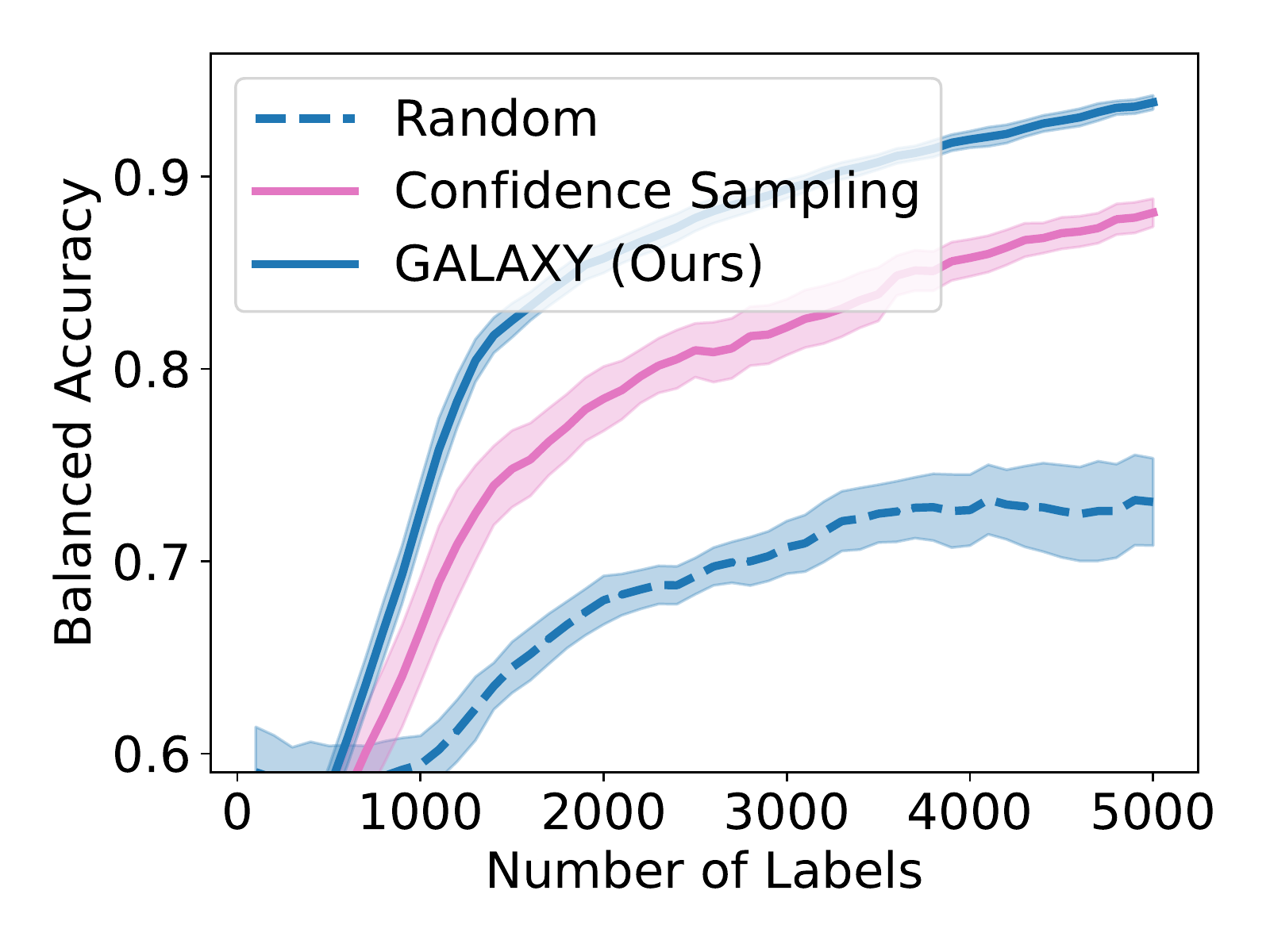}
    \end{subfigure}
    \begin{subfigure}[t]{.49\linewidth}
        \includegraphics[width=\linewidth]{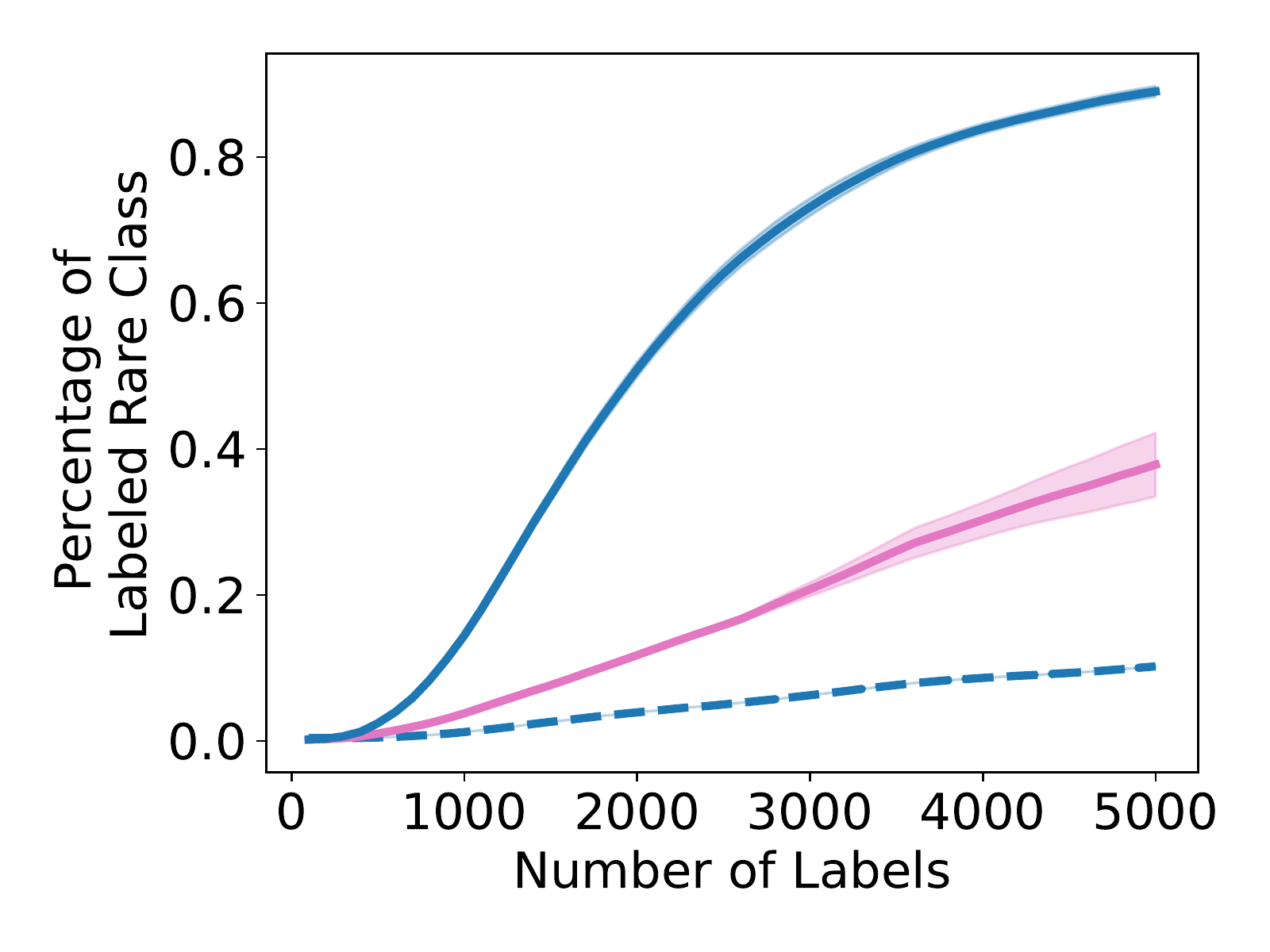}
    \end{subfigure}
    \caption{These plots depict results on a modified version of CIFAR100 with a class imbalance of $1:99$. \textbf{Left}: The plot displays the balanced accuracy of the methods where the per-class accuracy is weighted by the class size. \textbf{Right:} The plot displays the percentage of labels queried from the minority class.}
    \label{fig:intro}
\end{figure}

\section{Related Work} \label{sec:related}

\textbf{Deep Active Learning:} There are two main algorithmic approaches in deep active learning: uncertainty and diversity sampling. Uncertainty sampling queries the unlabeled examples that are most uncertain. Often, uncertainty is quantified by distance to the decision boundary of the current model (e.g., \cite{tong2001support, kremer2014active, balcan2009agnostic}). Several variants of uncertainty sampling have been proposed for deep learning (e.g., \cite{gal2017deep, ducoffe2018adversarial, beluch2018power}.

In a batch setting, uncertainty sampling often leads to querying a set of very similar examples, giving much redundant information. To deal with this issue, diversity sampling queries a batch of diverse examples that are representative of the unlabeled pool. \citet{sener2017active} propose a Coreset approach for diversity sampling for deep learning. Others include \cite{gissin2019discriminative, geifman2017deep}. However, under the class imbalance scenarios where collecting minority class examples is crucial, previous work by \citet{coleman2020similarity} has shown such methods to be less effective as they are expected to collect a subset with equal imbalance to the original dataset.

Recently, significant attention has been given to designing hybrid methods that query a batch of informative and diverse examples. \citet{ash2019deep} balances uncertainty and diversity by representing each example by its last layer gradient and aiming to select a batch of examples with large Gram determinant. \citet{citovsky2021batch} uses hierarchical agglomerative clustering to cluster the examples in the feature space and then cycles over the clusters querying the examples with smallest margin. Finally, \citet{ash2021gone} uses experimental design to find a batch of diverse and uncertain examples. 

\textbf{Class Imbalance Deep Active Learning:} A number of recent works have studied active learning in the presence of class imbalance. \citet{coleman2020similarity} proposes SEALS, a method for the setting of class imbalance and an enormous pool of unlabeled examples.
\citet{kothawade2021similar} proposes SIMILAR, which picks examples that are most similar with the collected in-distribution examples and most different from the known out-of-distribution examples. Their method achieves this by maximizing the conditional mutual information. Our setting is closest to their out-of-distribution imbalance scenario. 
Finally, \citet{emam2021active} tackles the class imablance issue by proposing BASE which queries the same number of examples per each predicted class based on margin from decision boundary, where the margin is defined by the distance to the model boundary in the feature space of the neural network. 

By contrast with the above methods, our method searches adaptively in the output space within each batch for the best threshold separating two classes and provably produces a class-balanced set of labeled examples. Adaptively searching for the best threshold is especially helpful in the extreme class imbalance setting where the decision boundary of the model is often skewed. If all labels were known, theoretically it may be possible to modify the training algorithm to obtain a model without any skew towards on class. However, this is not possible in active learning where we do not know the labels a priori. In addition, it is expensive and undesirable in practical cases to modify the training algorithm \cite{roh2020fairbatch}, making it attractive to work for any off-the-shelf training algorithm (like ours).

\textbf{Graph-based active learning:} There have been a number of proposed graph-based adaptive learning algorithms (e.g., \cite{zhu2003semi, zhu2003combining, cesa2013active, gu2012towards, dasarathy2015s2, kushnir2020diffusion}. Our work is most closely related to \cite{dasarathy2015s2}, which proposed $S^2$ a graph-based active learning with strong theoretical guarantees. While $S^2$ assumes a graph as an input and will perform badly on difficult graphs, our work builds a framework that combines the ideas of $S^2$ with deep learning to perform active learning while continually improving the graph. We review this work in more detail in Section \ref{sec:s2_review}.



\section{Problem Statement and Notation} \label{sec:notation}

We investigate the pool-based batched active learning setting, where the learner has access to a large pool of unlabeled data examples $X = \{x_1, x_2, ..., x_N\}$ and there is an unknown ground truth \emph{label} function $f^\star: X \rightarrow \{1, 2, ..., K\}$ giving the label of each example. At each iteration $t$, the learner selects a small batch of $B$ unlabeled examples $\{x_i^{(t)}\}_{i=1}^B \subseteq X $ from the pool, observes its labels $\{f^\star(x_i^{(t)})\}_{i=1}^B$, and adds the examples to $L$, the set of all the examples that have been labeled so far. After each batch of examples is queried, the learner updates the deep learning model training on all of the examples in $L$ and uses this model to inform which batch of examples are selected in the next round.


 We are particularly interested in the \emph{extreme class imbalance} problem where one class is significantly larger than the other classes. Mathematically,
\begin{align*}
    \frac{N_k}{N_K} \leq \epsilon, \ k = 1, ..., K-1.
\end{align*}
where $N_k = |\{i : f^\star(x_i) = k\}|$ denote the number of examples that belong to the $k$-th class.
Here, $\epsilon$ is some small class imbalance factor and each of the \emph{in-distribution} classes $1$, ..., $K-1$ contains far fewer examples than the \emph{out-of-distribution} $K$-th class. This models scenarios, for example in roof damage classification, self-driving and medical applications, in which a small fraction of the unlabeled examples are from classes of interest and the rest of the examples may be lumped into an ``other" or ``irrelevant" category.


Henceforth, we denote $f = \mc{A}(L)$ to be a model trained on the labeled set $L$, where $\mc{A}: \mc{L} \rightarrow \mc{F}$ is a training algorithm that trains on a labeled set and outputs a classifier.

\section{Review of $S^2$ Graph-based Active Learning}\label{sec:s2_review}
\begin{figure}
    \centering
    \begin{tabular}{l l}
        \imagetop{(a)} & \imagetop{\includegraphics[width=.8\linewidth]{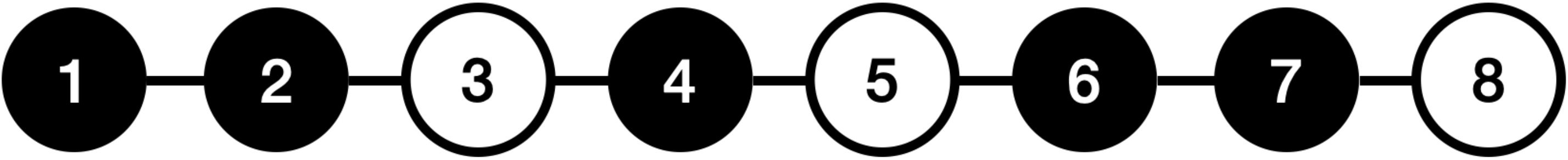}} \\
        \imagetop{(b)} & \imagetop{\includegraphics[width=.8\linewidth]{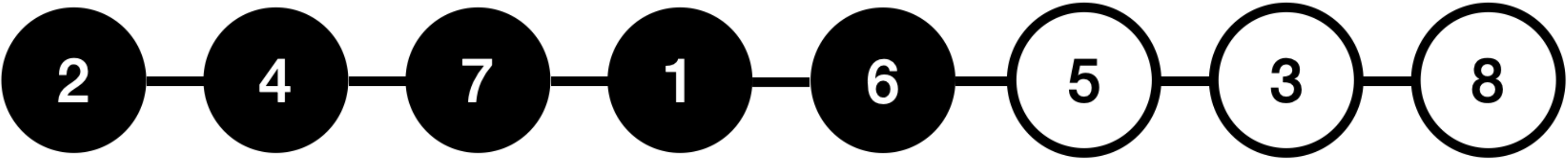}} \\
        \imagetop{(c)} & \imagetop{\includegraphics[width=.8\linewidth]{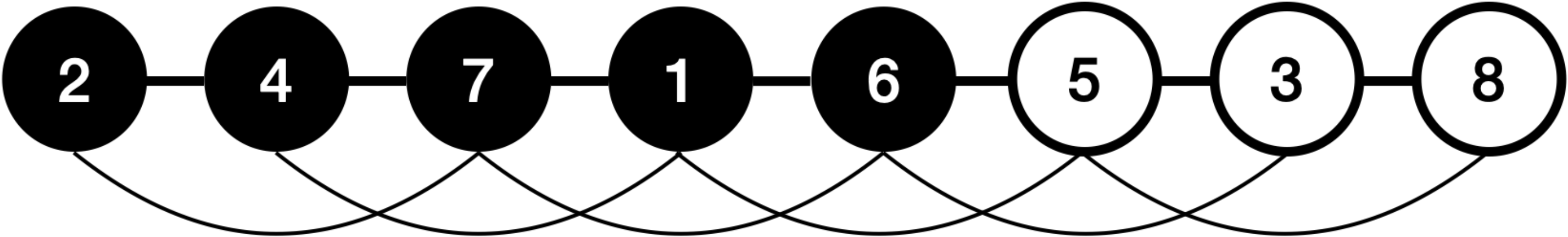}}
    \end{tabular}
    \caption{All three graphs contain the same eight numbered examples but connected in different ways. The ground truth binary labels are represented by the black and white coloring of the examples. As a result, each of their cut boundaries are: (a) $\partial C = \{2, 3, 4, 5, 6, 7, 8\}$, (b) $\partial C = \{6, 5\}$ and (c) $\partial C = \{1, 6, 5, 3\}$.}
    \label{fig:s2_demo}
\end{figure}
To begin, we introduce some notation. With slight abuse of notation, we define a undirected graph over the pool as $G = (X, E)$ with vertex set $X$ and edge set $E$, where each node $x$ in the graph is also an example in the pool $X$. Let $P_{ij}(X, E) \subset E$ denote the shortest path connecting $x_i$ and $x_j$ in the graph $G = (X, E)$, and let $|P_{ij}(X, E)|$ denote its length.\footnote{In the special case when $x_i$ and $x_j$ are not connected, we define $|P_{ij}(X, E)| = \infty$.}

\citet{dasarathy2015s2} proposed a graph-based active learning algorithm $S^2$ (see Algorithm~\ref{alg:s2}) that aims to identify all the cuts $C = \{(x, y) \in E : f^\star(x) \neq f^\star(y)\}$, namely every edge that connects an oppositely labeled pair of examples. In particular, if one labeled all of the examples in the cut boundary $\partial C = \{x \in X : \exists e \in C, x \in e\}$, one would be able to classify every example in the pool correctly. As an example, take the linear graph in Figure~\ref{fig:s2_demo}(a) where each node represents a numbered example and is associated with a binary label (black/white). It is thus necessary to query at least seven examples to identify the cut boundary and therefore the labeling.

 $S^2$ performs an alternating two phased procedure. First, if every pair of connected examples have the same label, $S^2$ queries an unlabeled example uniform at random. Second, whenever there exist paths connecting examples with different labels, the algorithm bisects along the shortest among these paths until it identifies a cut. The algorithm then removes the identified edge from the graph.
 
 \citet{dasarathy2015s2} has shown that the PAC sample complexity to identify all cuts highly depends on the input graph's structural properties. As an example consider Figure~\ref{fig:s2_demo}, which depicts several graphs on the same set of examples. In graph~(a), one needs to query at least seven examples, while in graph~(b) one need only query two examples. Indeed, such a difference can be made arbitrarily large. The work of \citet{dasarathy2015s2}, however, did not address the major problem of how to obtain an ``easier" graph that requires fewer examples queries for active learning.


\begin{algorithm}
\begin{algorithmic}
\STATE \textbf{Input: } Graph $G = (X, E)$, total budget $2 \leq M \leq N$

\STATE \textbf{Initialize: } Labeled set $L = \{x, y\}$ where $x \neq y$ are uniform random samples from $X$

\FOR{$t = 1, 2, ..., M$}

    \STATE $i^\star, j^\star \leftarrow \argmin_{i, j: (x_i, x_j \in L) \land (f^\star(x_i)\neq f^\star(x_j))} P_{ij}(X, E)$

    \IF{$|P_{i^\star j^\star}(X, E)| = \infty$}
        \STATE Query $x \sim \text{Unif}(X \backslash L)$
    \ELSE
        \STATE Query the mid point $x$ of $P_{i^\star j^\star}(X, E)$
    \ENDIF

    \STATE Update labeled set: $L \leftarrow L \cup \{x\}$
    \STATE Remove cuts from current graph:\\ $E \leftarrow E \backslash \{(x, y)\in E: (y\in L) \land (f^\star(x) \neq f^\star(y))\}$ 
\ENDFOR

\STATE \textbf{Return: } Labeled set $L$

\end{algorithmic}
\caption{$S^2$: Shortest Shortest Path}
\label{alg:s2}
\end{algorithm}

\section{\texttt{GALAXY}}
Our algorithm \texttt{GALAXY} shown in Algorithm~\ref{alg:galaxy} blends graph-based active learning and deep active learning through the following two alternating steps
\begin{itemize}
    \item Given a trained neural network, we construct a graph based on the neural network's predictions and apply a modified version of $S^2$ to it, efficiently collecting an informative batch of labels (Algorithm~\ref{alg:galaxy}).

    \item Given a new batch of labeled examples, we train a better neural network model that will be used to construct a better graph for active learning (Algorithm~\ref{alg:build_graph}).

\end{itemize}
    

To construct graphs from a learned neural network in multi-class settings, we take a one-vs-all approach on the output (softmax) space as shown in Algorithm~\ref{alg:build_graph}. For each class $k$, we build a linear graph $G^{(k)}$ by ranking the model's confidence margin $\delta^{(k)}_i$ on each example $x_i \in X$. For a neural network $f_\theta$, the confidence margin is simply defined as $\delta^{(k)}_i = [f_\theta(x_i)]_k - \max_{k'}[f_\theta(x_i)]_{k'}$, where $[\cdot]_k$ denotes the $k$-th element of the softmax vector. We break ties by the confidence scores $[f_\theta(x_i)]_k$ themselves (equivalently by $\max_{k'}[f_\theta(x_i)]_{k'}$). 
Intuitively, for each graph $G^{(k)}$, we sort examples according to their likelihood to belong to class $k$.
Indeed, when $f_\theta$ is a perfect classifier on the pool, each linear graph constructed behaves like Figure~\ref{fig:s2_demo}(b), i.e., every example in class $k$ is perfectly separated from all other classes with only one cut in between.
\begin{algorithm}
\begin{algorithmic}
\STATE \textbf{Input: } Pool $X$, neural network $f_\theta: \mc{X} \to \simp^{(K - 1)}$
\STATE Confidence for each $i \in [N]$: $q_i \leftarrow \max_{k \in [K]} [f_\theta(x_i)]_k$
\FOR{$k = 1, ..., K$}
    \STATE Compute margins $\delta_i^{(k)} \leftarrow [f_\theta(x_i)]_k - q_i$
    
    \STATE Sort by margin and break ties by confidence: $A^{(k)} = \{\alpha_i^{(k)} \in [N]\}_{i=1}^N$ is a permutation of $[N]$ and denotes an ordering index set such that $\forall i < N$
    \begin{align*}
        \left(\delta_{\alpha_i^{(k)}} \leq \delta_{\alpha_{i+1}^{(k)}}\right) \land \left(\delta_{\alpha_i^{(k)}} = \delta_{\alpha_{i+1}^{(k)}} \Rightarrow q_{\alpha_i^{(k)}} \leq q_{\alpha_{i+1}^{(k)}}\right)
    \end{align*}
    
    \STATE Connect edges $E^{(k)} \leftarrow \{(x_{\alpha_i^{(k)}}, x_{\alpha_{i+1}^{(k)}}) : i \in [N - 1]\}$
    
\ENDFOR

\STATE \textbf{Return: } Graphs $\{G^{(k)} = (X, E^{(k)})\}_{k=1}^K$, rankings $\{A^{(k)}\}_{k=1}^K$

\end{algorithmic}
\caption{Build Graph}
\label{alg:build_graph}
\end{algorithm}

Our algorithm \texttt{GALAXY} shown in Algorithm~\ref{alg:galaxy} proceeds in a batched style. For each batch, \texttt{GALAXY} first trains a neural network to obtain graphs constructed by the procedure described above. It then performs $S^2$ style bisection-like queries on all of the graphs but with two major differences.
\begin{itemize}
    \item To accommodate multiple graphs, we treat each linear graph $G^{(k)}$ as a binary one-vs-all graph, where we gather all shortest paths $P_{ij}(X, E^{(k)})$ that connects a queried example in class $k$ and a queried example in any other classes. If such shortest paths exist, we then find the shortest of these shortest paths across \emph{all} $k \in [K]$ and the bisect the resulting shortest shortest path like in $S^2$.
    
    \item When no such shortest path exists, instead of querying an example uniform at random as in $S^2$, we increase the order of the graphs by Algorithm~\ref{alg:connect} and perform bisection procedures on the updated graphs. Here, we refer to an $m$-th order linear graph where each example is connected to all of its neighboring $m$ examples from each side. For example, Figure~\ref{fig:s2_demo}(c) shows a graph of order $2$ as opposed to an order $1$ graph shown in Figure~\ref{fig:s2_demo}(b). Intuitively, bisecting after the \texttt{Connect} procedure is equivalent with querying around the discovered cuts. For example in the case of Figure~\ref{fig:s2_demo}(b), after querying examples $5$ and $6$, our algorithm will connect second order edges and query exactly examples $1$ and $3$ as the next two queries.
\end{itemize}

\begin{algorithm}
\begin{algorithmic}
\STATE \textbf{Input: } Graphs $\{G^{(k)} = (X, E^{(k)})\}_{k=1}^K$, rankings $\{A^{(k)}\}_{k=1}^K$, edge order $ord$

\FOR{$k = 1, ..., K$}
    \STATE $E^{(k)} \leftarrow E^{(k)} \cup \{(x_{\alpha_i^{(k)}}, x_{\alpha_{i+ord}^{(k)}})\}_{i=1}^{N - ord}$
\ENDFOR

\STATE \textbf{Return: } Graphs $\{G^{(k)} = (X, E^{(k)})\}_{k=1}^K$
\caption{Connect: build higher order edges}
\label{alg:connect}

\end{algorithmic}
\end{algorithm}

\begin{algorithm}
\begin{algorithmic}
\STATE\textbf{Input: } Pool $X$, neural network training algorithm $\mc{A}: \mc{L} \rightarrow \mc{F}$, number of rounds $T$, batch size $B$ ($TB \leq |X|$)
\STATE\textbf{Initialize: } Uniformly sample $B$ elements without replacement from $X$ to form $L$

\FOR{$t = 1, ..., T - 1$}
    \STATE Train neural network: $f_\theta \leftarrow \mc{A}(L)$

    \STATE $\{G^{(k)}\}_{k=1}^K, \{A^{(k)}\}_{k=1}^K \leftarrow \texttt{Build\_Graph}(X, f_\theta)$

    \STATE Graph order: $ord \leftarrow 1$
    
    \FOR{$s = 1, ..., B$}
        \STATE Find shortest shortest path among all graphs:
        \vspace{-.5\intextsep}
        \begin{align} \label{eqn:s3_path}
            i^\star,& j^\star, k^\star \leftarrow \argmin_{\substack{i, j, k: (x_i, x_j \in L) \land\\ (f^\star(x_i) = k, f^\star(x_j)\neq k)}} P_{ij}(X, E^{(k)})
        \end{align} 
        \vspace{-.5\intextsep}
        \IF{$|P_{i^\star j^\star}(X, E^{(k^\star)})| = \infty$}
            \STATE $\{G^{(k)}\} \leftarrow \texttt{Connect}(\{G^{(k)}\}, \{A^{(k)}\}, ord + 1)$
            \STATE Recompute $i^\star, j^\star, k^\star$ by \eqref{eqn:s3_path}
            \STATE $ord \leftarrow ord + 1$
        \ENDIF
        \STATE Query the mid point $x$ of $P_{i^\star j^\star}(X, E^{(k^\star)})$
    
        \STATE Update labeled set: $L \leftarrow L \cup \{x\}$
        \STATE Remove cuts for each $G^{(k)}, k\in[K]$: $E^{(k)} \leftarrow E^{(k)} \backslash \{(x, y)\in E^{(k)}: (y\in L) \land (f^\star(x) \neq f^\star(y))\}$ 
    \ENDFOR

\ENDFOR
\STATE \textbf{Return: } Final classifier $f_\theta \leftarrow \mc{A}(L)$
\end{algorithmic}
\caption{\texttt{GALAXY}}
\label{alg:galaxy}
\end{algorithm}

\section{Analysis}
\begin{figure*}
    \centering
    \begin{tabular}{l l}
        \imagetop{(a)} & \imagetop{\includegraphics[width=.7\linewidth]{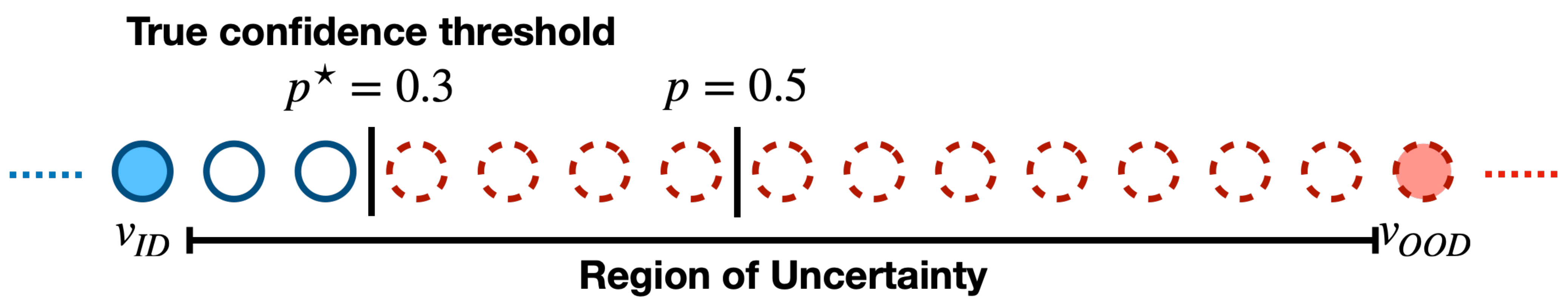}} \\
        & \\
        \imagetop{(b)} & \imagetop{\includegraphics[width=.7\linewidth]{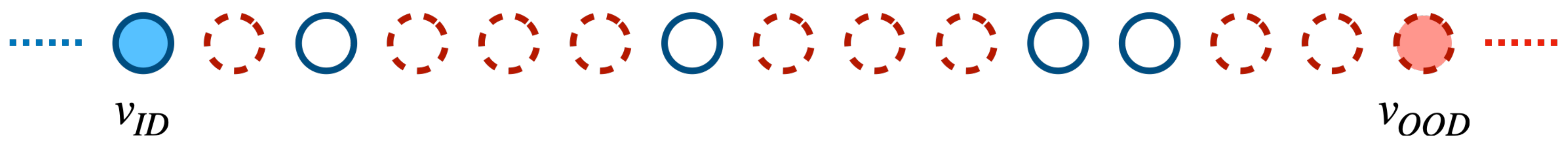}}
    \end{tabular}
    \caption{(a) and (b) denotes two different linear graphs generated from two different classifiers by ranking their corresponding confidence scores. The ground truth label of each example is represented by its border -- solid blue lines for class \texttt{ID} and dotted red lines for class \texttt{OOD}. The linear graph in (a) is a \emph{separable} graph where all examples in class \texttt{ID} are of low confidence scores while class \texttt{OOD} examples have higher confidence scores. By contrast, the linear graph in (b) is \emph{non-separable}.}
    \label{fig:analysis}
\end{figure*}

\subsection{\texttt{GALAXY} at the Extreme}
In this section, we analyze the behavior of \texttt{GALAXY} in the two-class setting and specifically when class $\texttt{OOD}$ (out-of-distribution) has much more examples than class $\texttt{ID}$ (in-distribution). In a binary separable case, we bound expected batch balancedness of both the bisection procedure and \texttt{GALAXY}, whereas uncertainty sampling could fail to sample any \texttt{ID} examples at all. At the end we also show a noise tolerance guarantee that \texttt{GALAXY} will find the optimal uncertainty threshold with high probability. For proper indexing below, we let $\texttt{OOD} = 1$ and $\texttt{ID} = 2$. 

\textbf{Reduction to single linear graph.} 
Recall in Algorithm~\ref{alg:build_graph}, we build a graph for each class by sorting the margin scores of that class on the pool. In the binary classification case, it is sufficient to consider one graph generated from sorting confidence scores. This follows due to the symmetry of the two graphs in the binary case.

\textbf{Universal approximator and region of uncertainty.}
Since neural networks are universal approximators, we make the following assumption.
\begin{assumption}
Given a labeled subset $L$ of $X$, let $f_\theta = \mc{A}(L)$ be the neural network classifier trained on $L$. We assume $f_\theta$ classifies every example in $L$ perfectly. Namely, $\forall x \in L, f^\star(x) = \texttt{OOD} \iff [f_\theta(x)]_{\texttt{OOD}} > 0.5$.
\end{assumption}
\begin{definition}
Let $v_{\texttt{ID}}$ denote the labeled example in class \texttt{ID} with the highest confidence and $v_{\texttt{OOD}}$ denote the labeled example in class \texttt{OOD} with the lowest confidence. We then define all examples in between, i.e. $\{x \in X: [f_\theta(v_{\texttt{ID}})]_{\texttt{OOD}} < [f_\theta(x)]_{\texttt{OOD}} < [f_\theta(v_{\texttt{OOD}})]_{\texttt{OOD}}\}$, to be the \emph{region of uncertainty}.
\end{definition}
In practice, since the neural network model should be rather certain in its predictions on the labeled set, we expect the region of uncertainty to be relatively large. We show an example in Figure~\ref{fig:analysis} where filled circles represent the labeled examples. The filled blue example on the left is $v_{\texttt{ID}}$ and the filled red example on the right is $v_{\texttt{OOD}}$. The region of uncertainty are then all of the examples in between.

In the following, we first derive our balancedness results in the separable case such as in Figure~\ref{fig:analysis}(a) and turn to noise tolerance analysis in the end. First in the separable case, we let $n_{\texttt{ID}}$ denote the number of in-distribution examples and $n_{\texttt{OOD}}$ denote the number of out-of-distribution examples both in the region of uncertainty. First we analyze the bisection following procedure that adaptively finds the true uncertainty threshold (cut in the separable linear graph).
\begin{definition} \label{def:bisection}
Our bisection procedure works as follows when given region of uncertainty with $n_{\texttt{ID}} + n_{\texttt{OOD}}$ examples.
\begin{itemize}
    \item Let $m$ represent the number of examples in the latest region of uncertainty, query the $i-th$ example based on the sorted uncertainty scores. Here, $i = \lfloor\frac{m}{2}\rfloor + 1$ or $i = \lceil\frac{m}{2}\rceil$ with equal probability.
    
    
    
    \item If observe \texttt{ID}, update the region of uncertainty to be examples ranked $\{i + 1, ..., n_{\texttt{ID}} + n_{\texttt{OOD}}\}$ based on uncertainty scores. Recurse on the new region of uncertainty. 
    
    Similarly, if observe \texttt{OOD}, update the region of uncertainty to be examples ranked $\{1, ..., i - 1\}$ based on uncertainty scores. Recurse on the new region of uncertainty. 
    
    \item Terminate once the region of uncertainty is empty ($m=0$).
\end{itemize}
\end{definition}
The exact number of labels collected from the \texttt{ID} and \texttt{OOD} classes depends on the specific numbers of examples in the region of uncertainty.  We characterize the generic behavior of the biscection process with a simple probabilistic model showing the following theorem that the method tends to find balanced examples among both classes. Proofs of the following results appear in the Appendices.
\begin{theorem}[Sample Balancedness of Bisection] \label{thm:bisection}
Assume $n_{\texttt{ID}} + n_{\texttt{OOD}} \geq 2^{z+2} - 1$ for some $z\geq 1$ and that the examples labeled in the first $z$ bisection steps are all from class \texttt{OOD}.  At least $n' \geq 3$ examples remain in the region of uncertainty and suppose that $n_{\texttt{ID}} \sim \text{Unif}(\{1, ..., n' - 1\})$. If we let $m_{\texttt{ID}}$ and $m_{\texttt{OOD}}$ be the number of queries in each of the \texttt{ID} and \texttt{OOD} classes made by the bisection procedure described in Definition~\ref{def:bisection}, we must have
\begin{align*}
    \frac{\E[m_{\texttt{ID}}]}{\E[m_{\texttt{OOD}}]} \geq \frac{\frac{1}{2}\log_2 (n')}{z + \frac{1}{2} \log_2 (n')}
\end{align*}
where the expectations are with respect to the uniform distribution above.
\end{theorem}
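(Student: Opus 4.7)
The plan is to split the bisection into the first $z$ $\texttt{OOD}$-only steps and the residual bisection confined to the reduced region of size $n'$, show by a reflection symmetry that the residual $\texttt{ID}$ and $\texttt{OOD}$ counts have equal expectation, and lower-bound that common expectation by $\tfrac12\log_2(n')$.

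By the hypothesis that the first $z$ queries all observe $\texttt{OOD}$, we have $\E[m_{\texttt{OOD}}] = z + \E[m'_{\texttt{OOD}}]$ and $\E[m_{\texttt{ID}}] = \E[m'_{\texttt{ID}}]$, where $m'_{\texttt{ID}},\, m'_{\texttt{OOD}}$ count the labels collected by the residual bisection on the size-$n'$ region. The key step is to show $\E[m'_{\texttt{ID}}] = \E[m'_{\texttt{OOD}}]=:\tau$. I would do this by coupling, step by step, the residual bisection on the region with $n_{\texttt{ID}}=c$ against the one on the positionally reversed region (via $\phi(j)=n'+1-j$) with $\texttt{ID}$ and $\texttt{OOD}$ labels swapped, which corresponds to $n_{\texttt{ID}}=n'-c$. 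The map $\phi$ interchanges the candidate midpoints $\lfloor m/2\rfloor+1$ and $\lceil m/2\rceil$, so the uniform coin-flip at each step is preserved; an $\texttt{ID}$ observation at position $j$ in one trajectory corresponds to an $\texttt{OOD}$ observation at $\phi(j)$ in the other, and the update rules $\{1,\ldots,i-1\}$ (observe $\texttt{OOD}$) and $\{i+1,\ldots,m\}$ (observe $\texttt{ID}$) are exchanged under $\phi$. Since $c$ and $n'-c$ have the same distribution under $\text{Unif}(\{1,\ldots,n'-1\})$, the joint law of $(m'_{\texttt{ID}}, m'_{\texttt{OOD}})$ is exchangeable in its two coordinates, giving equality of means.

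Next, I would lower bound $2\tau = \E[m'_{\texttt{ID}}+m'_{\texttt{OOD}}]$, the expected total length of the residual bisection, by $\log_2(n')$. The basic observation is that for any $i\in\{\lfloor m/2\rfloor+1,\lceil m/2\rceil\}$, both candidate next sizes $i-1$ and $m-i$ are bounded by $\lfloor m/2\rfloor$, so at each step the region of uncertainty shrinks by at least a factor of two. This halving property, combined with an information-theoretic bookkeeping argument (at termination the cut position has been identified exactly, which carries $\log_2(n'-1)$ bits of entropy under the uniform prior, plus the final ``draining'' query needed to empty the last nonempty region), yields $2\tau \geq \log_2(n')$.

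Finally, since $f(x)=x/(z+x)$ is increasing in $x>0$ for fixed $z>0$, substituting $\tau\geq\tfrac{1}{2}\log_2(n')$ yields
\begin{equation*}
\frac{\E[m_{\texttt{ID}}]}{\E[m_{\texttt{OOD}}]} \;=\; \frac{\tau}{z+\tau} \;\geq\; \frac{\tfrac{1}{2}\log_2(n')}{z+\tfrac{1}{2}\log_2(n')},
\end{equation*}
as claimed. The main obstacle is verifying the reflection coupling rigorously across the dynamically shrinking subregions, since one has to check that the ``observe-$\texttt{ID}$ versus observe-$\texttt{OOD}$'' branch correspondence propagates through sub-regions of sub-regions; the $2\tau \geq \log_2(n')$ bound is essentially a deterministic counting argument but requires care to avoid losing the constant in front of the $\log_2$, especially when a query happens to straddle the true cut so that the region shrinks by strictly more than a factor of two.
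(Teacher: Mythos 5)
Your proposal follows essentially the same route as the paper's proof: decompose $m_{\texttt{OOD}} = z + m'_{\texttt{OOD}}$, use the reflection symmetry of $n_{\texttt{ID}} \sim \text{Unif}(\{1,\ldots,n'-1\})$ together with the symmetry of the bisection rule to get $\E[m'_{\texttt{ID}}]=\E[m'_{\texttt{OOD}}]$, lower-bound the expected number of residual queries by $\log_2(n')$, and finish with the monotonicity of $x \mapsto x/(z+x)$. You actually supply more detail than the paper on the two steps it asserts without argument (the explicit reflection coupling, and the bookkeeping behind the $\log_2(n')$ bound, which the paper states as a bare deterministic claim), so this is a faithful and, if anything, more careful version of the same proof.
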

The unbalancedness factor of the region of uncertainty is at most $\frac{n_{\texttt{ID}}}{n_{\texttt{OOD}}} \leq \frac{1}{2^z}$. When $z$ is large, we must have $\frac{\E[m_{\texttt{ID}}]}{\E[m_{\texttt{OOD}}]} \geq \frac{1}{z} \gg \frac{n_{\texttt{ID}}}{n_{\texttt{OOD}}}$. Thus, the bisection procedure collects a batch that \emph{improves on the unbalanced factor exponentially}.

Next, we characterize the balancedness of the full \texttt{GALAXY} algorithm. When running \texttt{GALAXY} on a separable linear graph, it is equivalent with first running bisection procedure to find the optimal uncertainty threshold, followed by querying around the two sides of the threshold equally. We therefore incorporate our previous analysis on the bisection procedure and especially focus on the second part where one queries around the optimal uncertainty threshold.

\begin{corollary}[Sample Balancedness of Batched \texttt{GALAXY}, Proof in Appendix~\ref{sec:galaxy_proof}] \label{thm:galaxy_balance}
Assume $n_{\texttt{ID}}$ and $n_{\texttt{OOD}}$ are under same noiseless setting as in Theorem~\ref{thm:bisection}. If \texttt{GALAXY} takes an additional $B' < n'$ queries after the bisection procedure terminates, so that $B = B' + \lceil\log_2(n_{\texttt{ID}}  + n_{\texttt{OOD}})\rceil$ examples are labeled in total and if we let $m_{\texttt{ID}}$ and $m_{\texttt{OOD}}$ be the number of queries in each class made by \texttt{GALAXY}, we must have
\begin{align}
    \frac{\E[m_{\texttt{ID}}]}{\E[m_{\texttt{OOD}}]} &\geq \frac{y}{B - y} \nonumber \geq \frac{y}{z + 5y + 3}
\end{align}
where $y = \max\{\lfloor \frac{B'}{4}\rfloor, \frac{1}{2}\log_2 (n')\}$ and the expectations are with respect to the uniform distribution in $n_{\texttt{ID}}$.
\end{corollary}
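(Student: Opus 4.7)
The strategy is to decompose GALAXY's $B$ queries into the two consecutive phases that occur on a separable linear graph: the bisection phase of Definition~\ref{def:bisection}, which terminates deterministically in $T := \lceil \log_2(n_{\texttt{ID}} + n_{\texttt{OOD}})\rceil$ queries once the cut edge $(p, p+1)$ has been pinpointed, and the post-bisection phase of $B'$ further queries that GALAXY makes by repeatedly invoking \texttt{Connect} to raise the edge order. Writing $m_{\texttt{ID}} = m_{\texttt{ID}}^{\mathrm{bis}} + m_{\texttt{ID}}^{\mathrm{post}}$ and similarly for $\texttt{OOD}$, I aim to establish $\E[m_{\texttt{ID}}^{\mathrm{bis}}] \geq \tfrac{1}{2}\log_2 n'$ and $m_{\texttt{ID}}^{\mathrm{post}} \geq \lfloor B'/4 \rfloor$, which together give $\E[m_{\texttt{ID}}] \geq y$, and hence $\E[m_{\texttt{OOD}}] = B - \E[m_{\texttt{ID}}] \leq B - y$, yielding the first inequality $\E[m_{\texttt{ID}}]/\E[m_{\texttt{OOD}}] \geq y/(B - y)$.

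The bisection bound is an immediate consequence of Theorem~\ref{thm:bisection}. Letting $a = \tfrac{1}{2}\log_2 n'$, the ratio $\E[m_{\texttt{ID}}^{\mathrm{bis}}]/\E[m_{\texttt{OOD}}^{\mathrm{bis}}] \geq a/(z + a)$ combined with the observation that bisection on a separable linear graph is deterministic binary search with $T \approx z + 2a$ total queries lets one solve algebraically for $\E[m_{\texttt{ID}}^{\mathrm{bis}}] \geq a$.

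For the post-bisection bound, the picture is as follows. Once the cut is located, the first-order graph has two disconnected, monochromatic components, so \texttt{Connect} raises the order to $r = 2, 3, \ldots$ successively. At each new order $r$ the new cross-cut edges connect positions on opposite sides of the cut at distance $r$. Each such edge either (a) sits between two already-labeled vertices, in which case the standard $S^2$ cut-removal bookkeeping disposes of it without consuming a query, or (b) has at least one unlabeled endpoint that can serve as the midpoint of a length-$2$ shortest path between two labeled oppositely-classed vertices. I plan to argue by induction on $r$ that the candidate midpoints arising in case (b) come in \texttt{ID}/\texttt{OOD} pairs symmetric about the cut and that both members of each pair must be queried before GALAXY can advance to order $r + 1$; hence each non-trivial order contributes at least one \texttt{ID} label, and even allowing for nearby positions that bisection may have already touched, $m_{\texttt{ID}}^{\mathrm{post}} \geq \lfloor B'/4 \rfloor$.

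The second inequality $y/(B-y) \geq y/(z + 5y + 3)$ is the algebraic fact $B \leq 6y + z + 3$ (up to absorbed constants), which I would verify from $B = B' + T$ together with $B' \leq 4y + O(1)$ (from $y \geq \lfloor B'/4 \rfloor$), $\log_2 n' \leq 2y$ (from $y \geq \tfrac{1}{2}\log_2 n'$), and $T \leq z + \log_2 n' + O(1)$ (because each of the $z$ initial \texttt{OOD} bisection queries at least halves the region of uncertainty). The main obstacle is the post-bisection bound: the Connect dynamics and the argmin tie-breaking inside~\eqref{eqn:s3_path} interact non-trivially, so the symmetry-and-exhaustion argument sketched above will require some careful bookkeeping, especially checking that an adversarial tie-breaking rule cannot systematically deplete the \texttt{ID} side before the \texttt{OOD} side within a single order.
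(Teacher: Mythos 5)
Your decomposition into a bisection phase and a post-bisection ``query around the cut'' phase, the bound $\E[m_{\texttt{ID}}^{\mathrm{bis}}] \geq \tfrac{1}{2}\log_2 n'$ imported from Theorem~\ref{thm:bisection}, and the closing algebra $B - y \leq z + (4y+3) + 2y - y$ all match the paper's proof. The gap is in the post-bisection step. You claim a \emph{deterministic} bound $m_{\texttt{ID}}^{\mathrm{post}} \geq \lfloor B'/4 \rfloor$ via a pairing/exhaustion argument over the \texttt{Connect} orders. This cannot hold in the model of Theorem~\ref{thm:bisection}: the only examples guaranteed to be \texttt{ID} are the $n_{\texttt{ID}}$ examples to the left of the cut inside the region of uncertainty, and $n_{\texttt{ID}}$ can be as small as $1$ (it is drawn from $\text{Unif}(\{1,\dots,n'-1\})$). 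When $n_{\texttt{ID}} < \lfloor B'/4\rfloor$ the post-bisection phase simply runs out of guaranteed \texttt{ID} examples, so no amount of bookkeeping about tie-breaking in \eqref{eqn:s3_path} can produce $\lfloor B'/4\rfloor$ of them surely. The paper's argument is that $B'$ alternating queries around the cut yield at least $\min\{n_{\texttt{ID}}, \lfloor B'/2\rfloor\}$ \texttt{ID} labels deterministically, and then the randomness over $n_{\texttt{ID}}$ is used: since $B' < n'$, one has $\P(n_{\texttt{ID}} \geq \lfloor B'/2\rfloor) \geq \tfrac{1}{2}$, hence $\E[\min\{n_{\texttt{ID}}, \lfloor B'/2\rfloor\}] \geq \lfloor B'/4\rfloor$. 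The factor of $4$ in $y$ comes precisely from this averaging over $n_{\texttt{ID}}$, not from slack in the \texttt{Connect} dynamics, and the hypothesis $B' < n'$ --- which your sketch never invokes --- is exactly what makes the averaging work.

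A secondary remark: the ``main obstacle'' you identify (adversarial tie-breaking depleting the \texttt{ID} side within an order) is a non-issue once you adopt the $\min\{n_{\texttt{ID}}, \lfloor B'/2\rfloor\}$ formulation, because on a separable linear graph every query strictly left of the cut is \texttt{ID} regardless of the order in which the two sides are explored; you only need that at most half of the $B'$ queries can land on the \texttt{OOD} side before the corresponding left-side midpoints become the shortest shortest paths. Replacing your induction on $r$ with this one-line observation, and then taking the expectation over $n_{\texttt{ID}}$, closes the gap and recovers the paper's proof.
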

In the above theorem, since $z < \log_2(n_{\texttt{ID}}  + n_{\texttt{OOD}})$ when $B'$ is large, we can then recover a constant factor of balancedness.
On the other hand, uncertainty sampling does not enjoy the same balancedness guarantees when the model decision boundary is biased towards the \texttt{OOD} class.

\begin{proposition}[Sample Balancedness of Uncertainty Sampling] \label{thm:uncertainty}
Assume $n_{\texttt{ID}}$ and $n_{\texttt{OOD}}$ are under same noiseless setting as in Theorem~\ref{thm:bisection}. If we let $m_{\texttt{ID}}$ and $m_{\texttt{OOD}}$ be the number of queries in each of the \texttt{ID} and \texttt{OOD} classes made by an uncertainty sampling procedure with batch size $B < n'$ steps, we have
\begin{align*}
    \min_{p_\star}\frac{\E[m_{\texttt{ID}}]}{\E[m_{\texttt{OOD}}]} = 0
\end{align*}
where the expectations are with respect to $n_{\texttt{ID}}$. The minimization is taken over the true confidence threshold $p_\star$ where the classification accuracy is maximized.
\end{proposition}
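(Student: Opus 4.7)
The plan is to exhibit a single adversarial choice of $p_\star$ for which uncertainty sampling queries no \texttt{ID} example, so that the non-negative ratio $\E[m_{\texttt{ID}}]/\E[m_{\texttt{OOD}}]$ attains the value $0$. Because the minimum is over all admissible true thresholds $p_\star$, producing one bad configuration suffices.

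First I would pin down what uncertainty sampling does in the reduced single-linear-graph picture. It selects the $B$ unlabeled points whose confidence $[f_\theta(x)]_{\texttt{OOD}}$ is closest to $0.5$. By the universal-approximator assumption, $f_\theta=\mc{A}(L)$ satisfies $[f_\theta(v_{\texttt{ID}})]_{\texttt{OOD}}<0.5<[f_\theta(v_{\texttt{OOD}})]_{\texttt{OOD}}$, so the model's decision boundary sits strictly inside the region of uncertainty at some position $m^{\text{boundary}}$ in the confidence-sorted order. Accordingly, uncertainty sampling returns a contiguous block of $B$ positions around $m^{\text{boundary}}$. Crucially, the location of $m^{\text{boundary}}$ is pinned down only by $\mc{A}$ and $L$, and is independent of where the true threshold $p_\star$ sits.

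I would then construct the bad configuration. Since $\mc{A}$ is an arbitrary training algorithm subject only to the universal-approximator assumption, I can take one producing a model whose boundary is, say, at the midpoint of the region of uncertainty, so that uncertainty sampling queries positions $\{m^{\text{boundary}}-\lfloor B/2\rfloor,\ldots,m^{\text{boundary}}+\lceil B/2\rceil-1\}$. Now choose $p_\star$ so that $n_{\texttt{ID}}=1$, placing the single \texttt{ID} example at the lowest-confidence position of the region of uncertainty. Because $n'\geq 2^{z+2}-1$ while $B<n'$, one checks that $m^{\text{boundary}}-\lfloor B/2\rfloor>1$, so every queried position exceeds $n_{\texttt{ID}}$ and therefore has true label \texttt{OOD}. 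Hence $m_{\texttt{ID}}=0$ and $m_{\texttt{OOD}}=B>0$ deterministically under this $p_\star$, giving ratio $0$, as required.

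The main obstacle is justifying that this adversarial placement of $m^{\text{boundary}}$ relative to $p_\star$ is realizable within the setup of Theorem~\ref{thm:bisection}. This is exactly the imbalance-induced skew the paper highlights: the universal-approximator assumption only pins down $f_\theta$ on labeled points, leaving the decision surface on the unlabeled pool essentially free. A brief remark that $\mc{A}$ can be any training algorithm consistent with $L$ (in particular, one whose boundary is not co-located with $p_\star$) closes this gap, and the remainder reduces to checking the set inclusion on the sorted linear graph.
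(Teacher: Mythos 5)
Your argument is essentially the paper's own (the paper gives only the informal version in the discussion following the proposition): exhibit an adversarial $p_\star$ for which the model's decision boundary sits more than $B$ sorted positions away from the true cut, so that all $B$ uncertainty-sampled points are \texttt{OOD} (possible since $B < n' \leq n_{\texttt{OOD}}$), forcing $m_{\texttt{ID}}=0$. The only nit is your specific midpoint placement: with $B$ close to $n'$ the window $\{m^{\text{boundary}}-\lfloor B/2\rfloor,\dots\}$ can reach position $1$, so you should either require $B<n'-2$ there or simply push the boundary further from the cut, which your final paragraph already licenses.
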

Note that the number of queries collected by uncertainty sampling, $m_{\texttt{ID}}$ and $m_{\texttt{OOD}}$, inherently depends on $p_\star$. The above proposition can been seen as demonstrated by Figure~\ref{fig:analysis}, where when training a model under extreme imbalance, the model could be biased towards \texttt{OOD} and thus the \emph{true confidence threshold} $p^\star \neq 0.5$. Since $B < n' \leq n_{\texttt{OOD}}$, in the worst case, uncertainty sampling could have selected a batch all in \texttt{OOD} regardless of the value $n_{\texttt{ID}}$ takes. Therefore, in such cases, we have $\frac{\E[m_{\texttt{OOD}}]}{\E[m_{\texttt{ID}}]} = 0$.


We will now show \texttt{GALAXY}'s robustness in non-separable graphs. We model the noises by randomly flipping the true labels of a separable graph.

\begin{theorem}[Noise Tolerance of \texttt{GALAXY}, Proof in Appendix~\ref{sec:noise_proof}] \label{thm:galaxy_noise}
Let $n = n_{\texttt{ID}} + n_{\texttt{OOD}}$. Suppose the true label of each example in the region of uncertainty is corrupted independently with probability $\frac{\delta}{\lceil \log_2 n \rceil}$. Let $B$ denote the batch size of \texttt{GALAXY}, $m_{\texttt{ID}}$ and $m_{\texttt{OOD}}$ be the number of queries in each class made by \texttt{GALAXY}, with probability at least $1 - \delta$ we have
\begin{align*}
    \frac{\E[m_{\texttt{ID}}]}{\E[m_{\texttt{OOD}}]} \geq \frac{\frac{1}{2}\log_2 (n')}{B - \frac{1}{2} \log_2 (n')}
\end{align*}
where the expectations are with respect to $n_{\texttt{ID}}$.
\end{theorem}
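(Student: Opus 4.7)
The plan is to reduce the noisy case to the noiseless analysis by arguing that, on a high-probability event, the bisection phase of \texttt{GALAXY} never queries a corrupted label, and therefore proceeds indistinguishably from the separable setting already analyzed in Theorem~\ref{thm:bisection}. First I would record the standard fact that bisection on a linear graph of $n$ nodes identifies the cut in at most $\lceil \log_2 n \rceil$ queries, since each successful step halves the region of uncertainty. This is the \emph{only} part of \texttt{GALAXY}'s execution whose behavior can be flipped by a corrupted label: once the true cut is located, the remaining $B - \lceil \log_2 n \rceil$ queries from the \texttt{Connect} phase are counted toward $m_{\texttt{ID}}$ and $m_{\texttt{OOD}}$ regardless of whether their observed labels happen to be flipped.

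Next I would apply a union bound over these at most $\lceil \log_2 n \rceil$ bisection queries. Since each label is corrupted independently with probability $\delta/\lceil \log_2 n \rceil$, the event $\mathcal{E}$ that none of the bisection queries sees a corrupted label has probability at least $1 - \delta$. Conditioned on $\mathcal{E}$, every branching decision made during bisection agrees with the decision it would have made in the separable linear graph, so the trajectory of the bisection phase is identically distributed to that analyzed in Theorem~\ref{thm:bisection} (with the randomness carried through $n_{\texttt{ID}} \sim \text{Unif}(\{1,\dots,n'-1\})$).

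I would then lower-bound $\E[m_{\texttt{ID}}\mid \mathcal{E}]$ by applying Theorem~\ref{thm:bisection} directly: the bisection phase alone contributes at least $\tfrac{1}{2}\log_2(n')$ \texttt{ID} queries in expectation. Any further \texttt{ID} queries collected during the subsequent \texttt{Connect}/bisection cycles only increase $m_{\texttt{ID}}$, so this lower bound persists for the whole run. Combined with the trivial identity $m_{\texttt{ID}} + m_{\texttt{OOD}} = B$, this gives $\E[m_{\texttt{OOD}}\mid\mathcal{E}] \leq B - \tfrac{1}{2}\log_2(n')$, and taking the ratio yields the claimed bound
\begin{align*}
\frac{\E[m_{\texttt{ID}}]}{\E[m_{\texttt{OOD}}]} \geq \frac{\tfrac{1}{2}\log_2(n')}{B - \tfrac{1}{2}\log_2(n')}
\end{align*}
on the event $\mathcal{E}$, which holds with probability at least $1 - \delta$ over the corruption randomness.

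The step I expect to be the main obstacle is verifying cleanly that the post-bisection \texttt{Connect} phase does not somehow cause $m_{\texttt{ID}}$ to shrink below the bisection-phase contribution when labels outside the bisection queries are corrupted. The key observation I would invoke is that $m_{\texttt{ID}}$ and $m_{\texttt{OOD}}$ count \emph{true} labels of queried examples (they are defined via $f^\star$, not via observed labels), so a flipped observation during \texttt{Connect} only affects subsequent graph updates and not the count itself; combined with the monotone argument that any extra queries only add to whichever count their true class dictates, the bound in Theorem~\ref{thm:bisection} transfers. A careful but short argument separating the randomness in $n_{\texttt{ID}}$ from the corruption randomness then yields the high-probability statement.
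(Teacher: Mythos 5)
Your proposal is correct and follows essentially the same route as the paper's proof: a union bound over the at most $\lceil \log_2 n\rceil$ bisection queries shows that with probability at least $1-\delta$ none is corrupted, and on that event the noiseless bound $\E[m_{\texttt{ID}}]\geq \tfrac{1}{2}\log_2(n')$ from Theorem~\ref{thm:bisection} transfers, giving $\E[m_{\texttt{OOD}}]\leq B-\tfrac{1}{2}\log_2(n')$ and the claimed ratio. Your additional remarks on why the post-bisection \texttt{Connect} phase cannot decrease $m_{\texttt{ID}}$ (since the counts are defined via true labels) make explicit a point the paper's proof leaves implicit, but do not change the argument.
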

Note in practice batch size in active learning is usually small. When $B \approx 2\log_2 n'$, the above result also implies that with about $n \cdot \frac{\delta}{\log_2 n}$ labels corrupted at random, \texttt{GALAXY} collects a balanced batch with probability at least $1-\delta$.

\subsection{Time Complexity}
We compare the per-batch running time of \texttt{GALAXY} with confidence sampling, showing that they are comparable in practice.
Recall that $B $ is the batch size, $N$ is the pool size and $K$ is the number of classes. Let $Q$ denote the forward inference time of a neural network on a single example.

Confidence sampling has running time $O(QN + KN + B\log N)$, where $O(QN)$ comes from forward passes on the entire pool, $O(KN)$ comes from computing the maximum confidence of each example and $O(B\log N)$ is the time complexity of choosing the top-B examples according to uncertainty scores. On the other hand, our algorithm \texttt{GALAXY} has time $O(QN + KN\log N + BKN)$. Here, $O(KN\log N)$ is the complexity of constructing $K$ linear graphs (Algorithm~\ref{alg:build_graph}) by sorting through margin scores and $O(BKN)$ comes from finding the shortest shortest path, for $B$ elements among $K$ graphs.

In practice $O(QN)$ dominates all of the other terms, so making these running times comparable. Indeed, in all of our experiments conducted in Section~\ref{sec:exp_results}, \texttt{GALAXY} is less than 5\% slower when compared to confidence sampling.

\section{Experiments}
We conduct experiments under $8$ different class imbalance settings. These settings are generated from three image classification datasets with various class imbalance factors. If the classes are balanced in the dataset, then most active learning strategies (including \texttt{GALAXY}) perform similarly, with relative small differences in performance, so we focus our presentation on unbalanced situations. We will first describe the setups (Section~\ref{sec:exp_setup}) before turning to the results in Section~\ref{sec:exp_results}. Finally, we present a comparison with vanilla $S^2$ algorithm and demonstrate the importance of reconstructing the graphs in Section~\ref{sec:exp_compare}.\footnote{Code can be found in \url{https://github.com/jifanz/GALAXY}.}

\subsection{Setup} \label{sec:exp_setup}
We use the following metric and training algorithm to reweight each class by its number of examples. By doing this, we downweight significantly the large ``other'' class while not ignoring it completely. More formally, we state our metric and training objective below.

\textbf{Metric: } Given a fixed batch size $B$ and after $T$ iterations, let $L \subset X$ denote the labeled set after the final iteration. Let $f = \mc{A}(L)$ be a model trained on the labeled set. We wish to maximize the \emph{balanced accuracy} over the pool
\begin{align*}
    ACC_{bal} &= \frac{1}{K}\sum_{k=1}^K \P(f(x) = f^\star(x) | f^\star(x) = k) \nonumber\\
    &= \frac{1}{K}\sum_{k=1}^K\left[\frac{1}{N_k}\sum_{i: f^\star(x_i)=k} \1\{f(x_i) = k\} \right]
\end{align*}
Recall that $N_k = |\{i : f^\star(x_i) = k\}|$ is the number of examples in class $k$. In all of our experiments, we set $B = 100$ and $T = 50$.
\begin{remark}
Finding good active classifiers on the pool is closely related to finding good classifiers that generalizes. See \citet{boucheron2005theory} for standard generalization bounds or \citet{katz2021improved} for a detailed discussion.
\end{remark}

\textbf{Training Algorithm $\mc{A}$:} Our training algorithm takes a labeled set $L$ as input. Let $N_k(L) = |\{x \in L : f^\star(x) = k\}|$ denote the number of labeled examples in class $k$, we use a cross entropy loss weighted by $\frac{1}{N_k(L)}$ for each class $k$. Note unlike the evaluation metric, we do not directly reweight the classes by $\frac{1}{N_k}$, as the active learning algorithms only have knowledge of labels of $L$ in practice.
Furthermore for all experiments, we use the ResNet-18 model in PyTorch pretrained on ImageNet for initialization and cold-start the training for every labeled set $L$. We use the Adam optimization algorithm with learning rate of $10^{-2}$ and a fixed $500$ epochs for each $L$.

\begin{figure*}[t!]
    \centering
    \begin{subfigure}[t]{.33\textwidth}
        \centering
        \includegraphics[width=\linewidth]{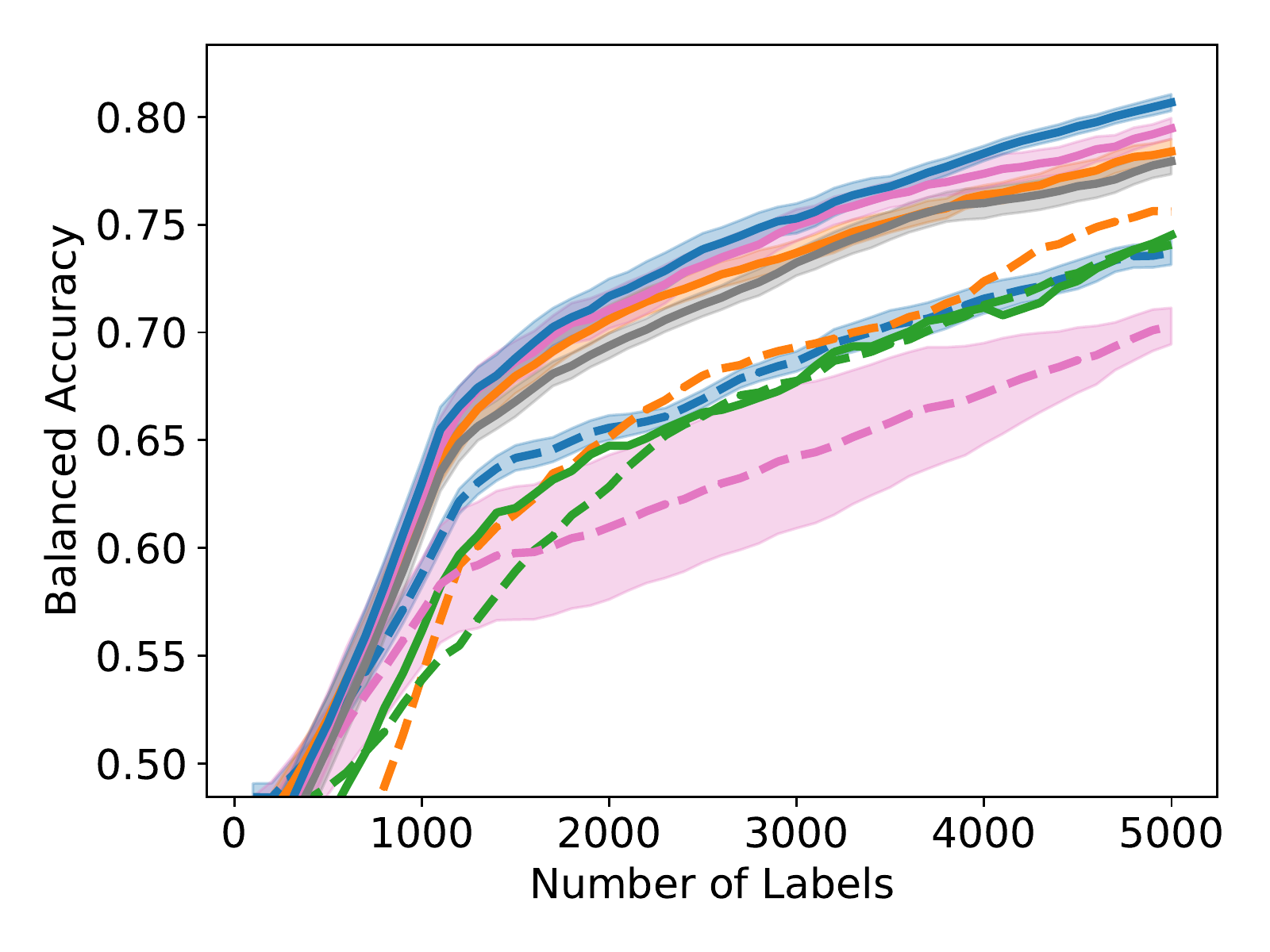}
        \caption{$ACC_{bal}$, CIFAR-10, 3 classes}
    \end{subfigure}
    \begin{subfigure}[t]{.33\textwidth}
        \centering
        \includegraphics[width=\linewidth]{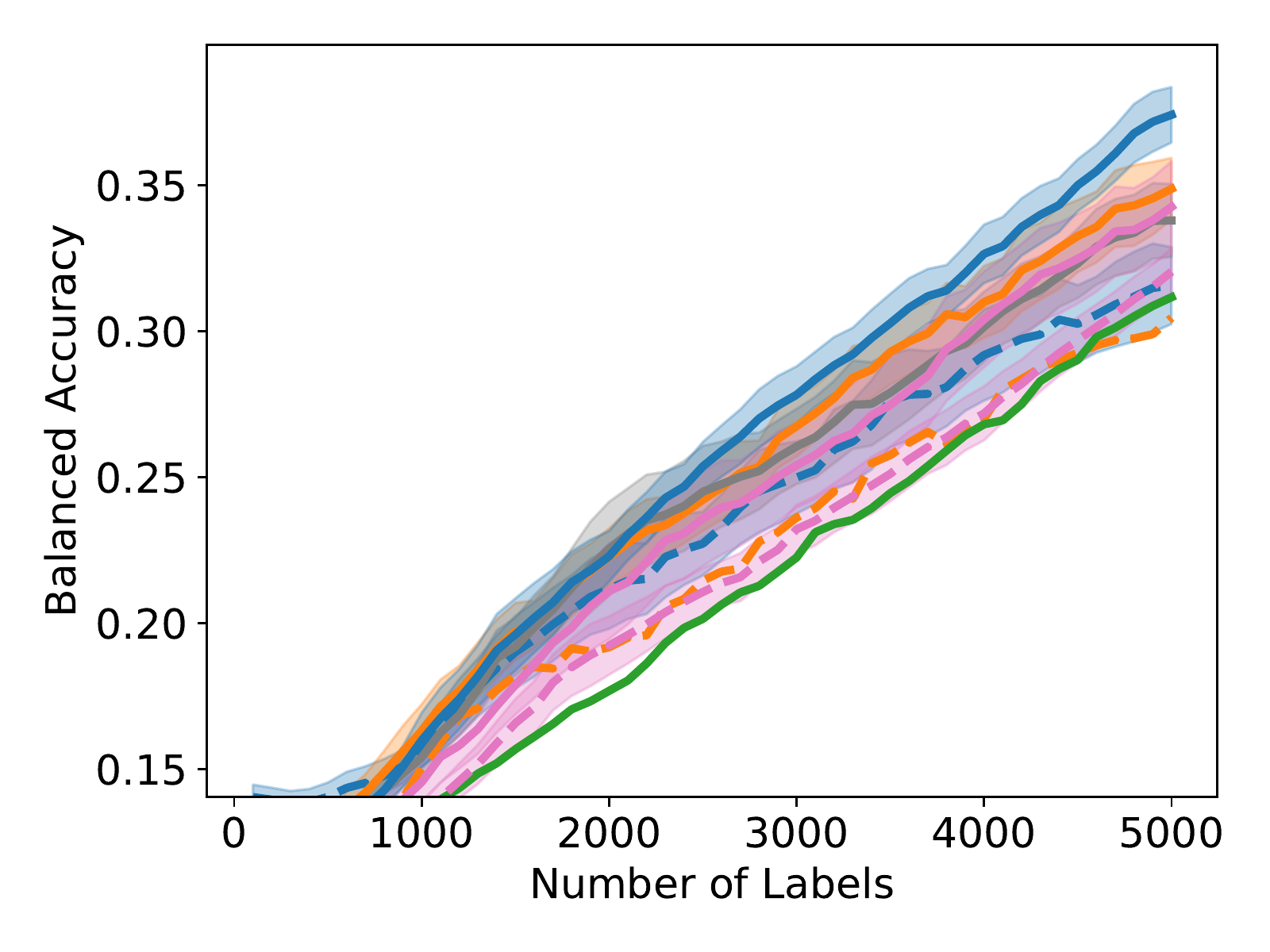}
        \caption{$ACC_{bal}$, CIFAR-100, 10 classes}
    \end{subfigure}
    \begin{subfigure}[t]{.33\textwidth}
        \centering
        \includegraphics[width=\linewidth]{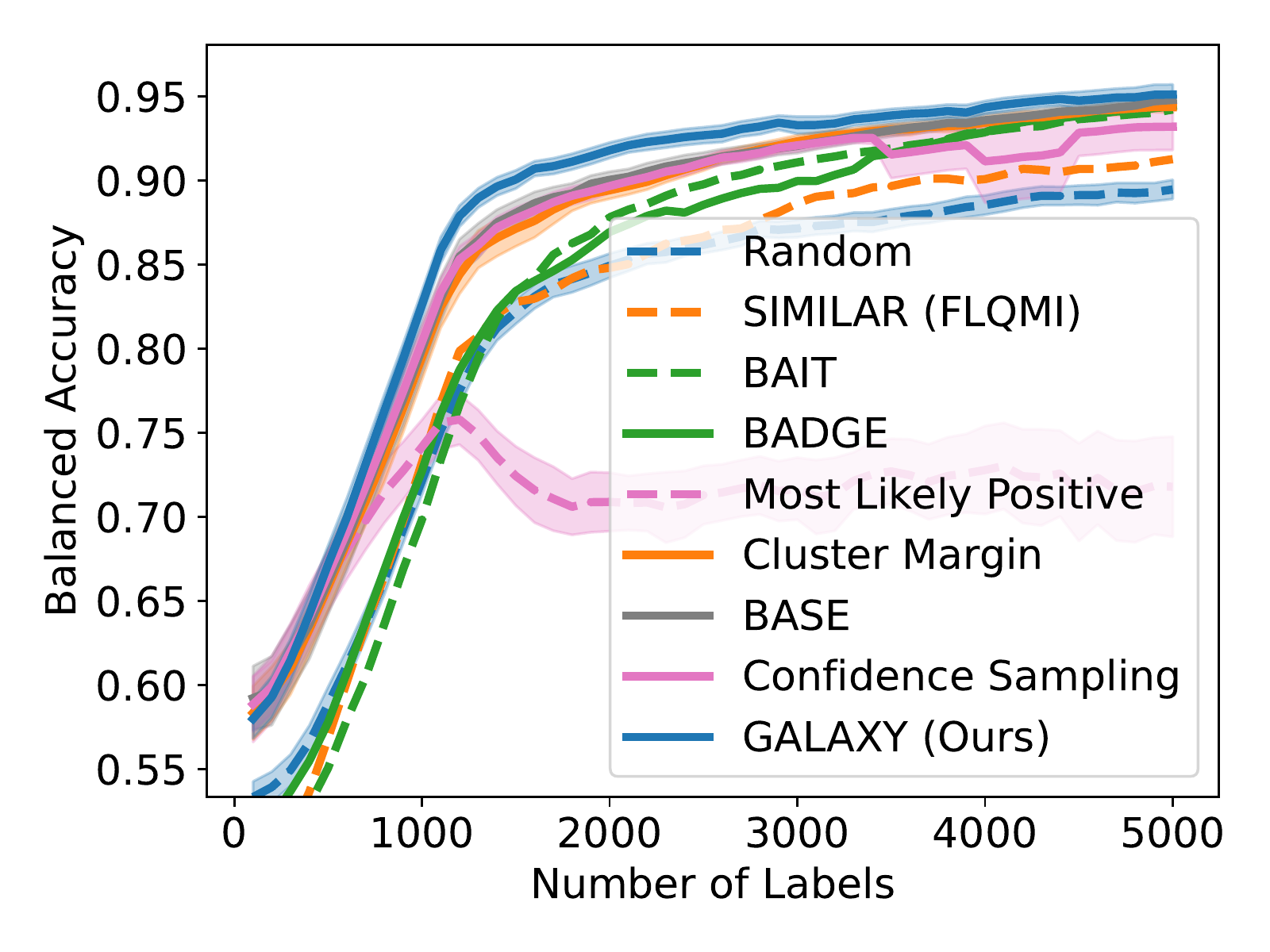}
        \caption{$ACC_{bal}$, SVHN, 2 classes}
    \end{subfigure}
    \caption{Performance of \texttt{GALAXY} against baselines on selected settings. Legend shown in (c) is shared across all three plots.}
    \label{fig:performance}
\end{figure*}

\subsection{Results on Extremely Unbalanced Datasets} \label{sec:exp_results}
We generate the extremely unbalanced settings for both binary and multi-class classification from popular vision datasets CIFAR-10\cite{krizhevsky2009learning}, CIFAR-100\cite{krizhevsky2009learning}, PathMNIST\cite{yang2021medmnist} and SVHN\cite{netzer2011reading}. CIFAR-10 and SVHN both initially have 10 balanced classes while CIFAR-100 has $100$ balanced classes and PathMNIST has $9$ classes. In all of CIFAR-10, CIFAR-100 and SVHN, we construct the large ``other'' class by grouping the majority of the original classes into one out-of-distribution class. Suppose there are originally $M$ ($M=10$ or $1000$) balanced classes in the original dataset, we form a $K$ ($K \ll M$) class extremely unbalanced dataset by reusing classes $1, ..., K-1$ as in the original dataset, whereas class $K$ contains all examples in classes $K, ..., M$ in the original dataset. For PathMNIST, we consider the task of identifying cancer-associated stroma from the rest of hematoxylin \& eosin stained histological images. Table~\ref{tab:dataset} shows the detailed sizes of the extremely unbalanced datasets.
\begin{table}[H]
    \begin{center}
    \begin{small}
    \begin{sc}
    \begin{tabular}{ccccc}
    \toprule
    Name & \# Classes & $N_K$ & $\sum_{k=1}^{K-1} N_k$ & $\epsilon$  \\
    \midrule
    CIFAR-10 & $2$ & $45000$ & $5000$ & $.1111$\\
    CIFAR-10 & $3$ & $40000$ & $10000$ & $.1250$\\
    CIFAR-100 & $2$ & $49500$ & $500$ & $.0101$\\
    CIFAR-100 & $3$ & $49000$ & $1000$ & $.0102$\\
    CIFAR-100 & $10$ & $40500$ & $9500$ & $.0123$\\
    SVHN & $2$ & $68309$ & $4948$ & $.0724$\\
    SVHN & $3$ & $54448$ & $18809$ & $.2546$\\
    PathMNIST & $2$ & $80595$ & $9401$ & $.1166$\\
    \bottomrule
    \end{tabular}
    \end{sc}
    \end{small}
    \end{center}
    \caption{Dataset details for each extremely unbalanced scenario. $N_K$ denotes the number of images in the out-of-distribution class while $\sum_{k=1}^{K-1} N_k$ is the total number of images in all in-distribution classes. $\epsilon$ is the class imbalance factor defined in Section~\ref{sec:notation}.}
    \label{tab:dataset}
\end{table}

\textbf{Comparison Algorithms:} 
We compare our algorithm \textbf{GALAXY} against eight baselines. \textbf{SIMILAR} \citep{kothawade2021similar}, \textbf{Cluster Margin} \citep{citovsky2021batch}, \textbf{BASE} \citep{emam2021active}, \textbf{BADGE} \citep{ash2019deep} and \textbf{BAIT} \citep{ash2021gone} have all been described in Section~\ref{sec:related}. For \textbf{SIMILAR}, we use the FLQMI relaxation of the submodular mutual information (SMI). We are unable to compare to the FLCMI relaxation of the submodular conditional mutual information (SCMI) due to excessively high memory usage required by the submodular maximization at pool size $N=50000$. As demonstrated in \citet{kothawade2021similar} however, one should expect only marginal improvement over FLQMI relaxation of the SMI. For \textbf{Cluster Margin} we choose clustering hyperparameters so there are exactly $50$ clusters. We choose margin batch size to be $k_m = 125$ while the target batch size is set to $k_t = B = 100$.

In addition to the above methods, \textbf{Confidence Sampling} \citep{settles2009active} is a type of uncertainty sampling that queries the least confident examples in terms of $\max_{k \in [K]} [f_\theta(x)]_k$. Here, $f_\theta$ is a classifier that outputs softmax scores and maximization is take with respect to classes. 
\textbf{Most Likely Positive} \citep{jiang2018efficient,warmuth2001active,warmuth2003active} is a heuristic often used in active search, where the algorithm selects the examples most likely to be in the in-distribution classes by its predictive probabilities.
Lastly, \textbf{Random} is the naive uniform random strategy. For each setting, we average over $4$ individual runs for each of \textbf{GALAXY}, \textbf{Cluster Margin}, \textbf{BASE}, \textbf{Confidence Sampling}, \textbf{Most Likely Positive} and \textbf{Random}. Due to computational constraints, we are only able to have single runs for each of \textbf{SIMILAR}, \textbf{BADGE} and \textbf{BAIT}. For algorithms with multiple runs, the standard error is also plotted as the confidence intervals. To demonstrate the active gains more clearly, all of our curves are smoothed by moving average with window size $10$.

As shown in Figure~\ref{fig:performance}, to achieve any balanced accuracy, \texttt{GALAXY} outperforms all baselines in terms of the number of labels requested, saving up to $30\%$ queries in some cases when comparing to the second best method. For example in unbalanced SVHN with 2 classes, to achieve $92\% accuracy$, \texttt{GALAXY} takes $1700$ queries while the second best algorithm takes $2500$ queries. In unbalanced CIFAR-100 with 3 classes, to reach $66\%$ accuracy, GALAX takes $1600$ queries while the second best algorithm takes $2200$ queries. As expected, \textbf{Cluster Margin} and \textbf{BASE} are competitive in many settings as they also target unbalanced settings. \textbf{BAIT} and \textbf{BADGE} tend to perform less well primarily due to their focus on collecting data-diverse examples, which has roughly the same class-imbalance as the pool. Full experimental results on all $8$ settings are presented in Appendix~\ref{sec:full_result}. In Appendix~\ref{sec:full_result}, we also include an experiment on CIFAR-100, 10 classes with batch size $1000$ showing the superiority of our method in the large budget regime.

\begin{figure}
    \centering
    \includegraphics[width=.6\linewidth]{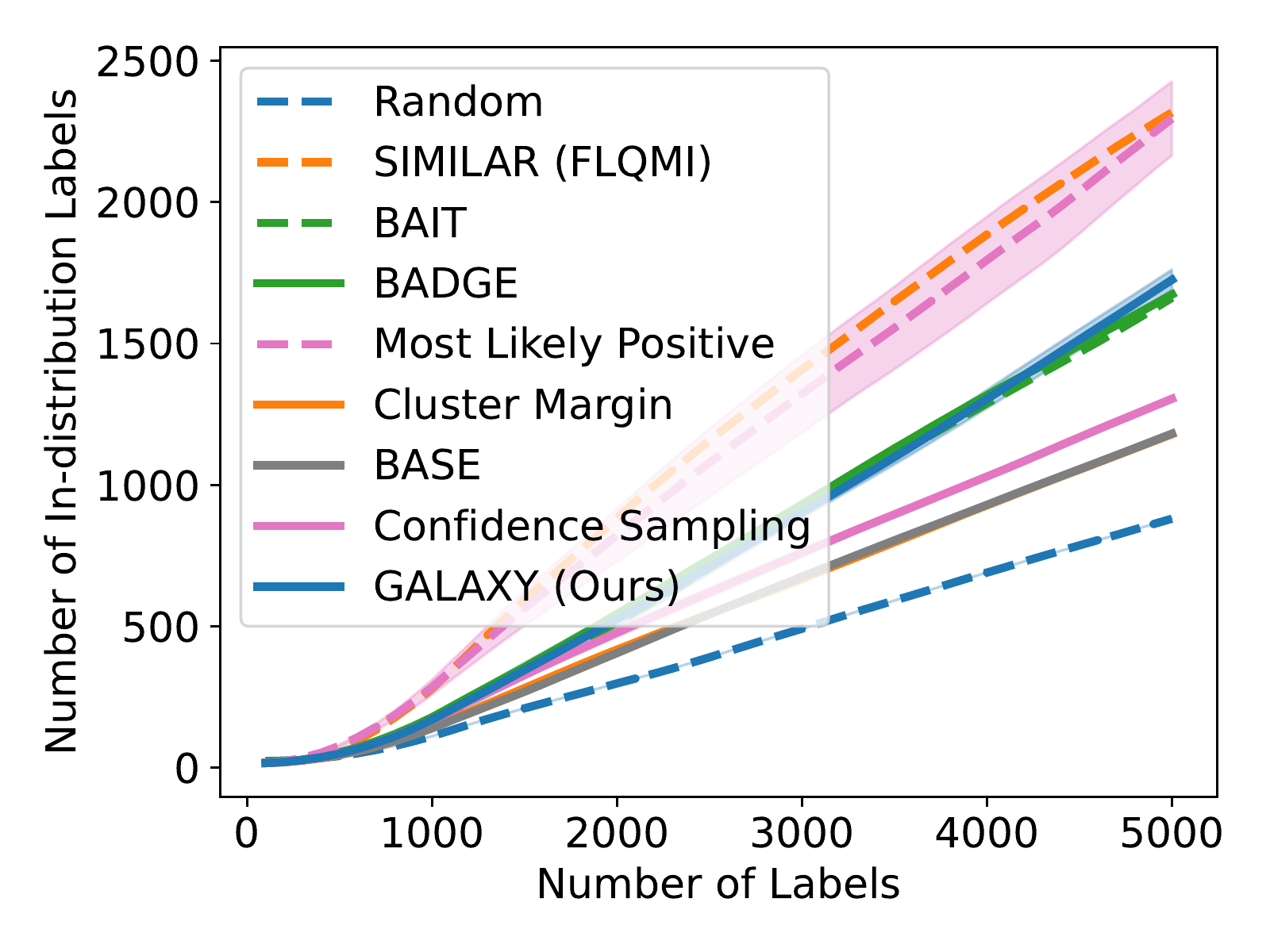}
    \caption{Number of in-distribution labels for CIFAR-10, 3 classes}
    \label{fig:minority_label}
\end{figure}
As shown in Figure~\ref{fig:minority_label}, \texttt{GALAXY}'s success relies on its inherent feature of collecting a more balanced labeled set of uncertain examples. In particular, \texttt{GALAXY} is collecting a significantly more in-distribution examples than most baseline algorithms including uncertainty sampling. On the other hand, although \textbf{SIMILAR} and \textbf{Most Likely Positive} both collect more examples in the in-distribution classes, their inferiority in balanced accuracy suggests that the examples are not representative enough. Indeed, both methods are inherently collecting labels for example that are \emph{certain}. This thus suggests the importance of collecting batches that are not only balanced but also uncertain.

\subsection{Comparison: $S^2$ vs \texttt{GALAXY}} \label{sec:exp_compare}

\begin{figure}
    \centering
    \includegraphics[width=.6\linewidth]{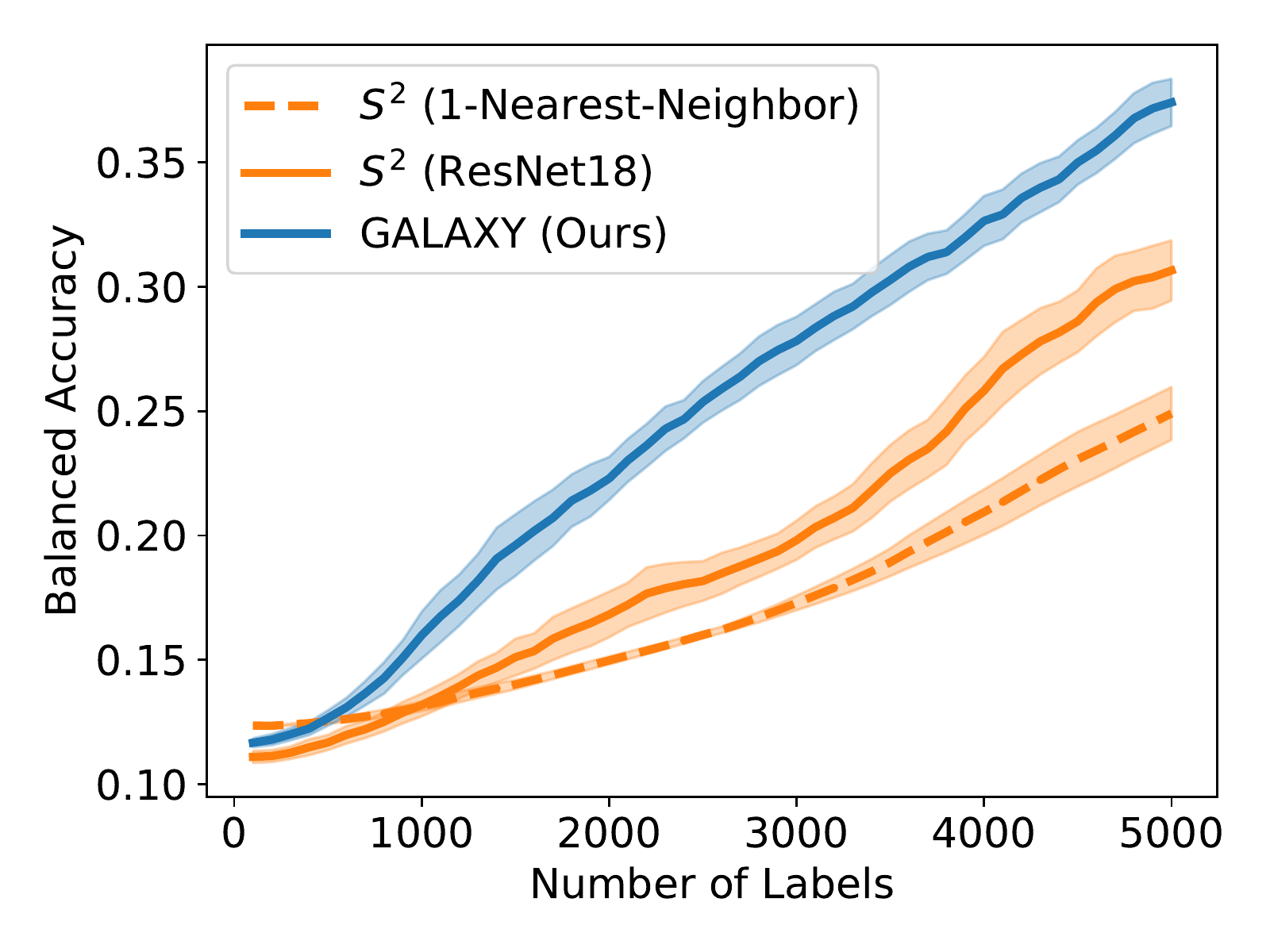}
    \caption{Comparison of \texttt{GALAXY} with vanilla $S^2$ with 1-nearest-neighbor and neural network classifiers. We use the CIFAR-100, 10 classes data setting for comparison.}
    \label{fig:s2_comparison}
\end{figure}
In this section, we conduct experiment to compare the original $S^2$ approach \citep{dasarathy2015s2} against our method. For $S^2$, we construct a 10-nearest-neighbor graph from feature vectors of a ResNet-18 model pretrained on ImageNet. We show two curves of $S^2$ using two different models -- 1-nearest-neighbor prediction on the graph and neural network training in Section~\ref{sec:exp_setup}. We note that the models training does not affect the $S^2$ active queries, whereas \texttt{GALAXY} constantly constructs graphs based on these updated models. As shown in Figure~\ref{fig:s2_comparison}, \texttt{GALAXY} outperforms $S^2$ with both models by a significant margin, showing the necessity on learning and constructing better graphs (Algorithm~\ref{alg:build_graph}).

\section{Future Direction}
In this paper, we propose a novel graph-based approach to deep active learning that particularly targets the extreme class imbalance cases. We show that our algorithm \texttt{GALAXY} outperforms all existing methods by collecting a mixture of balanced yet uncertain examples.
\texttt{GALAXY} runs on similar time complexity as other uncertainty based methods by retraining the neural network model only after each batch. However, it still requires sequential and synchronous labelling within each batch. This means the human labelling effort cannot be parallelized by multiple annotators. For future work, we would like to incorporate asynchronous labelling and investigate its effect on our algorithm.

\section*{Acknowledgement}
We thank Andrew Wagenmaker for insightful discussions. This work has been supported in part by NSF Award 2112471.
\bibliography{reference}

\begin{thebibliography}{32}
\providecommand{\natexlab}[1]{#1}
\providecommand{\url}[1]{\texttt{#1}}
\expandafter\ifx\csname urlstyle\endcsname\relax
  \providecommand{\doi}[1]{doi: #1}\else
  \providecommand{\doi}{doi: \begingroup \urlstyle{rm}\Url}\fi

\bibitem[Ash et~al.(2019)Ash, Zhang, Krishnamurthy, Langford, and
  Agarwal]{ash2019deep}
Ash, J.~T., Zhang, C., Krishnamurthy, A., Langford, J., and Agarwal, A.
\newblock Deep batch active learning by diverse, uncertain gradient lower
  bounds.
\newblock \emph{arXiv preprint arXiv:1906.03671}, 2019.

\bibitem[Ash et~al.(2021)Ash, Goel, Krishnamurthy, and Kakade]{ash2021gone}
Ash, J.~T., Goel, S., Krishnamurthy, A., and Kakade, S.
\newblock Gone fishing: Neural active learning with fisher embeddings.
\newblock \emph{arXiv preprint arXiv:2106.09675}, 2021.

\bibitem[Balcan et~al.(2009)Balcan, Beygelzimer, and
  Langford]{balcan2009agnostic}
Balcan, M.-F., Beygelzimer, A., and Langford, J.
\newblock Agnostic active learning.
\newblock \emph{Journal of Computer and System Sciences}, 75\penalty0
  (1):\penalty0 78--89, 2009.

\bibitem[Beluch et~al.(2018)Beluch, Genewein, N{\"u}rnberger, and
  K{\"o}hler]{beluch2018power}
Beluch, W.~H., Genewein, T., N{\"u}rnberger, A., and K{\"o}hler, J.~M.
\newblock The power of ensembles for active learning in image classification.
\newblock In \emph{Proceedings of the IEEE Conference on Computer Vision and
  Pattern Recognition}, pp.\  9368--9377, 2018.

\bibitem[Boucheron et~al.(2005)Boucheron, Bousquet, and
  Lugosi]{boucheron2005theory}
Boucheron, S., Bousquet, O., and Lugosi, G.
\newblock Theory of classification: A survey of some recent advances.
\newblock \emph{ESAIM: probability and statistics}, 9:\penalty0 323--375, 2005.

\bibitem[Cesa-Bianchi et~al.(2013)Cesa-Bianchi, Gentile, Vitale, and
  Zappella]{cesa2013active}
Cesa-Bianchi, N., Gentile, C., Vitale, F., and Zappella, G.
\newblock Active learning on trees and graphs.
\newblock \emph{arXiv preprint arXiv:1301.5112}, 2013.

\bibitem[Citovsky et~al.(2021)Citovsky, DeSalvo, Gentile, Karydas, Rajagopalan,
  Rostamizadeh, and Kumar]{citovsky2021batch}
Citovsky, G., DeSalvo, G., Gentile, C., Karydas, L., Rajagopalan, A.,
  Rostamizadeh, A., and Kumar, S.
\newblock Batch active learning at scale.
\newblock \emph{Advances in Neural Information Processing Systems}, 34, 2021.

\bibitem[Coleman et~al.(2020)Coleman, Chou, Katz-Samuels, Culatana, Bailis,
  Berg, Nowak, Sumbaly, Zaharia, and Yalniz]{coleman2020similarity}
Coleman, C., Chou, E., Katz-Samuels, J., Culatana, S., Bailis, P., Berg, A.~C.,
  Nowak, R., Sumbaly, R., Zaharia, M., and Yalniz, I.~Z.
\newblock Similarity search for efficient active learning and search of rare
  concepts.
\newblock \emph{arXiv preprint arXiv:2007.00077}, 2020.

\bibitem[Conathan et~al.(2018)Conathan, Oswal, and Nowak]{conathan2018active}
Conathan, D., Oswal, U., and Nowak, R.
\newblock Active sparse feature selection using deep convolutional features for
  image retrieval.
\newblock \emph{SIAM International Conference on Data Mining. First workshop on
  AI in insurance.}, 2018.
\newblock URL
  \url{https://www.ai-ml-amfam.com/_files/ugd/bf4274_dcb4dbffea374bc9b62abca6a51573d2.pdf}.

\bibitem[Dasarathy et~al.(2015)Dasarathy, Nowak, and Zhu]{dasarathy2015s2}
Dasarathy, G., Nowak, R., and Zhu, X.
\newblock S2: An efficient graph based active learning algorithm with
  application to nonparametric classification.
\newblock In \emph{Conference on Learning Theory}, pp.\  503--522. PMLR, 2015.

\bibitem[Ducoffe \& Precioso(2018)Ducoffe and Precioso]{ducoffe2018adversarial}
Ducoffe, M. and Precioso, F.
\newblock Adversarial active learning for deep networks: a margin based
  approach.
\newblock \emph{arXiv preprint arXiv:1802.09841}, 2018.

\bibitem[Emam et~al.(2021)Emam, Chu, Chiang, Czaja, Leapman, Goldblum, and
  Goldstein]{emam2021active}
Emam, Z. A.~S., Chu, H.-M., Chiang, P.-Y., Czaja, W., Leapman, R., Goldblum,
  M., and Goldstein, T.
\newblock Active learning at the imagenet scale.
\newblock \emph{arXiv preprint arXiv:2111.12880}, 2021.

\bibitem[Gal et~al.(2017)Gal, Islam, and Ghahramani]{gal2017deep}
Gal, Y., Islam, R., and Ghahramani, Z.
\newblock Deep bayesian active learning with image data.
\newblock In \emph{International Conference on Machine Learning}, pp.\
  1183--1192. PMLR, 2017.

\bibitem[Geifman \& El-Yaniv(2017)Geifman and El-Yaniv]{geifman2017deep}
Geifman, Y. and El-Yaniv, R.
\newblock Deep active learning over the long tail.
\newblock \emph{arXiv preprint arXiv:1711.00941}, 2017.

\bibitem[Gissin \& Shalev-Shwartz(2019)Gissin and
  Shalev-Shwartz]{gissin2019discriminative}
Gissin, D. and Shalev-Shwartz, S.
\newblock Discriminative active learning.
\newblock \emph{arXiv preprint arXiv:1907.06347}, 2019.

\bibitem[Gu \& Han(2012)Gu and Han]{gu2012towards}
Gu, Q. and Han, J.
\newblock Towards active learning on graphs: An error bound minimization
  approach.
\newblock In \emph{2012 IEEE 12th International Conference on Data Mining},
  pp.\  882--887. IEEE, 2012.

\bibitem[Jiang et~al.(2018)Jiang, Malkomes, Abbott, Moseley, and
  Garnett]{jiang2018efficient}
Jiang, S., Malkomes, G., Abbott, M., Moseley, B., and Garnett, R.
\newblock Efficient nonmyopic batch active search.
\newblock In \emph{32nd Conference on Neural Information Processing Systems
  (NeurIPS 2018)}, 2018.

\bibitem[Katz-Samuels et~al.(2021)Katz-Samuels, Zhang, Jain, and
  Jamieson]{katz2021improved}
Katz-Samuels, J., Zhang, J., Jain, L., and Jamieson, K.
\newblock Improved algorithms for agnostic pool-based active classification.
\newblock \emph{arXiv preprint arXiv:2105.06499}, 2021.

\bibitem[Kothawade et~al.(2021)Kothawade, Beck, Killamsetty, and
  Iyer]{kothawade2021similar}
Kothawade, S., Beck, N., Killamsetty, K., and Iyer, R.
\newblock Similar: Submodular information measures based active learning in
  realistic scenarios.
\newblock \emph{Advances in Neural Information Processing Systems}, 34, 2021.

\bibitem[Kremer et~al.(2014)Kremer, Steenstrup~Pedersen, and
  Igel]{kremer2014active}
Kremer, J., Steenstrup~Pedersen, K., and Igel, C.
\newblock Active learning with support vector machines.
\newblock \emph{Wiley Interdisciplinary Reviews: Data Mining and Knowledge
  Discovery}, 4\penalty0 (4):\penalty0 313--326, 2014.

\bibitem[Krizhevsky et~al.(2009)Krizhevsky, Hinton,
  et~al.]{krizhevsky2009learning}
Krizhevsky, A., Hinton, G., et~al.
\newblock Learning multiple layers of features from tiny images.
\newblock 2009.

\bibitem[Kushnir \& Venturi(2020)Kushnir and Venturi]{kushnir2020diffusion}
Kushnir, D. and Venturi, L.
\newblock Diffusion-based deep active learning.
\newblock \emph{arXiv preprint arXiv:2003.10339}, 2020.

\bibitem[Netzer et~al.(2011)Netzer, Wang, Coates, Bissacco, Wu, and
  Ng]{netzer2011reading}
Netzer, Y., Wang, T., Coates, A., Bissacco, A., Wu, B., and Ng, A.~Y.
\newblock Reading digits in natural images with unsupervised feature learning.
\newblock 2011.

\bibitem[Roh et~al.(2020)Roh, Lee, Whang, and Suh]{roh2020fairbatch}
Roh, Y., Lee, K., Whang, S.~E., and Suh, C.
\newblock Fairbatch: Batch selection for model fairness.
\newblock \emph{arXiv preprint arXiv:2012.01696}, 2020.

\bibitem[Sener \& Savarese(2017)Sener and Savarese]{sener2017active}
Sener, O. and Savarese, S.
\newblock Active learning for convolutional neural networks: A core-set
  approach.
\newblock \emph{arXiv preprint arXiv:1708.00489}, 2017.

\bibitem[Settles(2009)]{settles2009active}
Settles, B.
\newblock Active learning literature survey.
\newblock 2009.

\bibitem[Tong \& Koller(2001)Tong and Koller]{tong2001support}
Tong, S. and Koller, D.
\newblock Support vector machine active learning with applications to text
  classification.
\newblock \emph{Journal of machine learning research}, 2\penalty0
  (Nov):\penalty0 45--66, 2001.

\bibitem[Warmuth et~al.(2001)Warmuth, R{\"a}tsch, Mathieson, Liao, and
  Lemmen]{warmuth2001active}
Warmuth, M.~K., R{\"a}tsch, G., Mathieson, M., Liao, J., and Lemmen, C.
\newblock Active learning in the drug discovery process.
\newblock In \emph{NIPS}, pp.\  1449--1456, 2001.

\bibitem[Warmuth et~al.(2003)Warmuth, Liao, R{\"a}tsch, Mathieson, Putta, and
  Lemmen]{warmuth2003active}
Warmuth, M.~K., Liao, J., R{\"a}tsch, G., Mathieson, M., Putta, S., and Lemmen,
  C.
\newblock Active learning with support vector machines in the drug discovery
  process.
\newblock \emph{Journal of chemical information and computer sciences},
  43\penalty0 (2):\penalty0 667--673, 2003.

\bibitem[Yang et~al.(2021)Yang, Shi, Wei, Liu, Zhao, Ke, Pfister, and
  Ni]{yang2021medmnist}
Yang, J., Shi, R., Wei, D., Liu, Z., Zhao, L., Ke, B., Pfister, H., and Ni, B.
\newblock Medmnist v2: A large-scale lightweight benchmark for 2d and 3d
  biomedical image classification.
\newblock \emph{arXiv preprint arXiv:2110.14795}, 2021.

\bibitem[Zhu et~al.(2003{\natexlab{a}})Zhu, Ghahramani, and
  Lafferty]{zhu2003semi}
Zhu, X., Ghahramani, Z., and Lafferty, J.~D.
\newblock Semi-supervised learning using gaussian fields and harmonic
  functions.
\newblock In \emph{Proceedings of the 20th International conference on Machine
  learning (ICML-03)}, pp.\  912--919, 2003{\natexlab{a}}.

\bibitem[Zhu et~al.(2003{\natexlab{b}})Zhu, Lafferty, and
  Ghahramani]{zhu2003combining}
Zhu, X., Lafferty, J., and Ghahramani, Z.
\newblock Combining active learning and semi-supervised learning using gaussian
  fields and harmonic functions.
\newblock In \emph{ICML 2003 workshop on the continuum from labeled to
  unlabeled data in machine learning and data mining}, volume~3,
  2003{\natexlab{b}}.

\end{thebibliography}
\bibliographystyle{icml2022}

\newpage
\appendix
\onecolumn

\section{Proof of Theorem~\ref{thm:bisection}} \label{sec:bisection_proof}
\begin{proof}
First, when $n_{\texttt{ID}} + n_{\texttt{OOD}} \geq 2^{z+2} - 1$, it's easy to see by induction that after $k \leq z$ queries, the region of uncertainty shrinks to have at least $2^{z+2 - k} - 1$ examples. Therefore, after $z$ steps, we must have $n' \geq 3$.

Next, let $m'_{\texttt{OOD}}$ denote the number of \texttt{OOD} labels queried after bisecting $z$ steps, namely $m'_{\texttt{OOD}} + z = m_{\texttt{OOD}}$. Since in the last $n'$ examples, the number of \texttt{ID} examples is $n_{\texttt{ID}} \sim \text{Unif}(\{1, ..., n' - 1\})$, we must have the number of \texttt{OOD} examples to be symmetrically $n' - n_{\texttt{ID}} \sim \text{Unif}(\{1, ..., n' - 1\})$. Therefore, due to symmetry of distribution and the bisection procedure, in expectation the bisection procedure queries equal numbers of \texttt{ID} and \texttt{OOD} examples, i.e. $\E[m_{\texttt{ID}}] = \E[m'_{\texttt{OOD}}]$. Together we must have
\begin{align*}
    \frac{\E[m_{\texttt{ID}}]}{\E[m_{\texttt{OOD}}]} = \frac{\E[m_{\texttt{ID}}]}{z + \E[m_{\texttt{ID}}]} \geq \frac{\frac{1}{2} \log_2 (n')}{z + \frac{1}{2}\log_2 (n')}
\end{align*}
where the last inequality follows from the total number of queries $m_{\texttt{ID}} + m'_{\texttt{OOD}} \geq \log_2 (n')$ so $\E[m_{\texttt{ID}}] \geq \frac{1}{2} \log_2 (n')$.
\end{proof}

\section{Proof of Corollary~\ref{thm:galaxy_balance}}
\begin{proof} \label{sec:galaxy_proof}
As shown in Theorem~\ref{thm:bisection}, even without the $B'$ additional queries, we must have $\E[m_{\texttt{ID}}] \geq \frac{1}{2}\log_2 (n')$. Now, for the process of querying two sides of the cut, with $B'$ queries we can guarantee that at least $\min\{n_{\texttt{ID}}, \lfloor \frac{B'}{2} \rfloor\}$ examples to the left of the cut must have been queried and are in \texttt{ID}. Therefore, $\E[m_{\texttt{ID}}] \geq \E[\min\{n_{\texttt{ID}}, \lfloor \frac{B'}{2} \rfloor\}] \geq \lfloor \frac{B'}{4} \rfloor$. As a result, we the have $\E[m_{\texttt{ID}}] \geq y$ and $\E[m_{\texttt{OOD}}] \leq B - y$, so
\begin{align*}
    \frac{\E[m_{\texttt{ID}}]}{\E[m_{\texttt{OOD}}]} &\geq \frac{y}{B - y} = \frac{y}{z + B' + \lfloor \log_2(n') \rfloor - y} \\
    &\geq \frac{y}{z + (4y + 3) + 2y - y} = \frac{y}{z + 5y + 3}
\end{align*}
\end{proof}

\section{Proof of Theorem~\ref{thm:galaxy_noise}} \label{sec:noise_proof}
\begin{lemma}[Noise Tolerance of Bisection] \label{lem:noise}
Let $n = n_{\texttt{ID}} + n_{\texttt{OOD}}$. If the true label of each example in the region of uncertainty is corrupted independently with probability $\frac{\delta}{\lceil \log_2 n \rceil}$, the bisection procedure recovers the true uncertainty threshold with probability at least $1 - \delta$.
\end{lemma}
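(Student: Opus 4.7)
The plan is to reduce the noisy case to the noise-free separable case by conditioning on the event that none of the labels actually queried during bisection is corrupted. The bisection procedure of Definition~\ref{def:bisection} halves the region of uncertainty after every query (the new region contains either the examples strictly to the right or strictly to the left of the midpoint), so starting from $n$ examples it terminates after at most $\lceil \log_2 n \rceil$ queries. Hence at most $\lceil \log_2 n \rceil$ labels are ever revealed to the algorithm.

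Next, I would observe that the only way bisection can fail to recover the true uncertainty threshold is for one of the \emph{queried} labels to disagree with its true value: if every queried label is correct, then at every step the algorithm sees the same label as it would see in the noiseless separable instance with the same underlying threshold, takes the same branch, and therefore identifies the same cut. Since the label corruption events are independent across examples and each occurs with probability $\frac{\delta}{\lceil \log_2 n \rceil}$, a union bound over the (at most) $\lceil \log_2 n \rceil$ queried examples gives
\begin{align*}
    \Pr[\text{some queried label is corrupted}] \;\leq\; \lceil \log_2 n \rceil \cdot \frac{\delta}{\lceil \log_2 n \rceil} \;=\; \delta.
\end{align*}
Taking complements, with probability at least $1 - \delta$ every queried label is uncorrupted, in which case the argument above shows bisection recovers the true threshold.

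The only subtle point is that the set of examples on which the union bound is applied is itself random — which query is made at step $k$ depends on the labels observed at steps $1,\dots,k-1$. I would handle this by revealing corruptions lazily: rather than fixing all corruption flags up front, expose the (possibly corrupted) label only when the algorithm queries an example. Because the corruption indicators are mutually independent and independent of the algorithm's internal randomness, the probability that the next queried label is corrupted is $\frac{\delta}{\lceil \log_2 n \rceil}$ regardless of the history. A union bound over the at most $\lceil \log_2 n \rceil$ rounds then yields the claim without any measurability issue. I do not expect a serious obstacle here — this is essentially a one-line union bound plus a careful statement about what ``recovering the true threshold'' means — so the main care needed is just spelling out the reduction to the separable case cleanly.
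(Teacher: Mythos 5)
Your proof is correct and follows essentially the same route as the paper's: bisection makes at most $\lceil \log_2 n \rceil$ queries, and a union bound over those queries bounds the probability of seeing any corrupted label by $\lceil \log_2 n \rceil \cdot \frac{\delta}{\lceil \log_2 n \rceil} = \delta$. Your additional remark about handling the adaptivity of the queried set via lazy revelation of the corruption flags is a point the paper's one-line proof glosses over, but it does not change the argument.
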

\begin{proof}
Bisection procedure will make $\lceil \log_2 n \rceil$ queries and for each query the label could be corrupted with probability $\frac{\delta}{\lceil \log_2 n \rceil}$. Therefore, by union bound, we must then have
\begin{align*}
    \P(\text{\#corrupt queries} > 0) \leq \lceil \log_2 n \rceil \cdot \frac{\delta}{\lceil \log_2 n \rceil} = \delta.
\end{align*}
\end{proof}

Now we start to prove Theorem~\ref{thm:galaxy_noise}.
\begin{proof}
By Lemma~\ref{lem:noise}, we know with probability $1-\delta$, all of the bisection queries are not corrupted. Furthermore, as proved in Theorem~\ref{thm:bisection}, we at least take $\E[m_{\texttt{ID}}] \geq \log_2 n$ number of queries in class \texttt{ID}, so $\E[m_{\texttt{OOD}}] \leq B - \log_2 n$. As a result, with probability at least $1 - \delta$ we have the desired balancedness bound.
\end{proof}

\section{Full Experimental Results on CIFAR-10, CIFAR-100 and SVHN} \label{sec:full_result}
\begin{figure*}[t!]
    \centering
    \begin{subfigure}[t]{.49\textwidth}
        \centering
        \includegraphics[width=\textwidth]{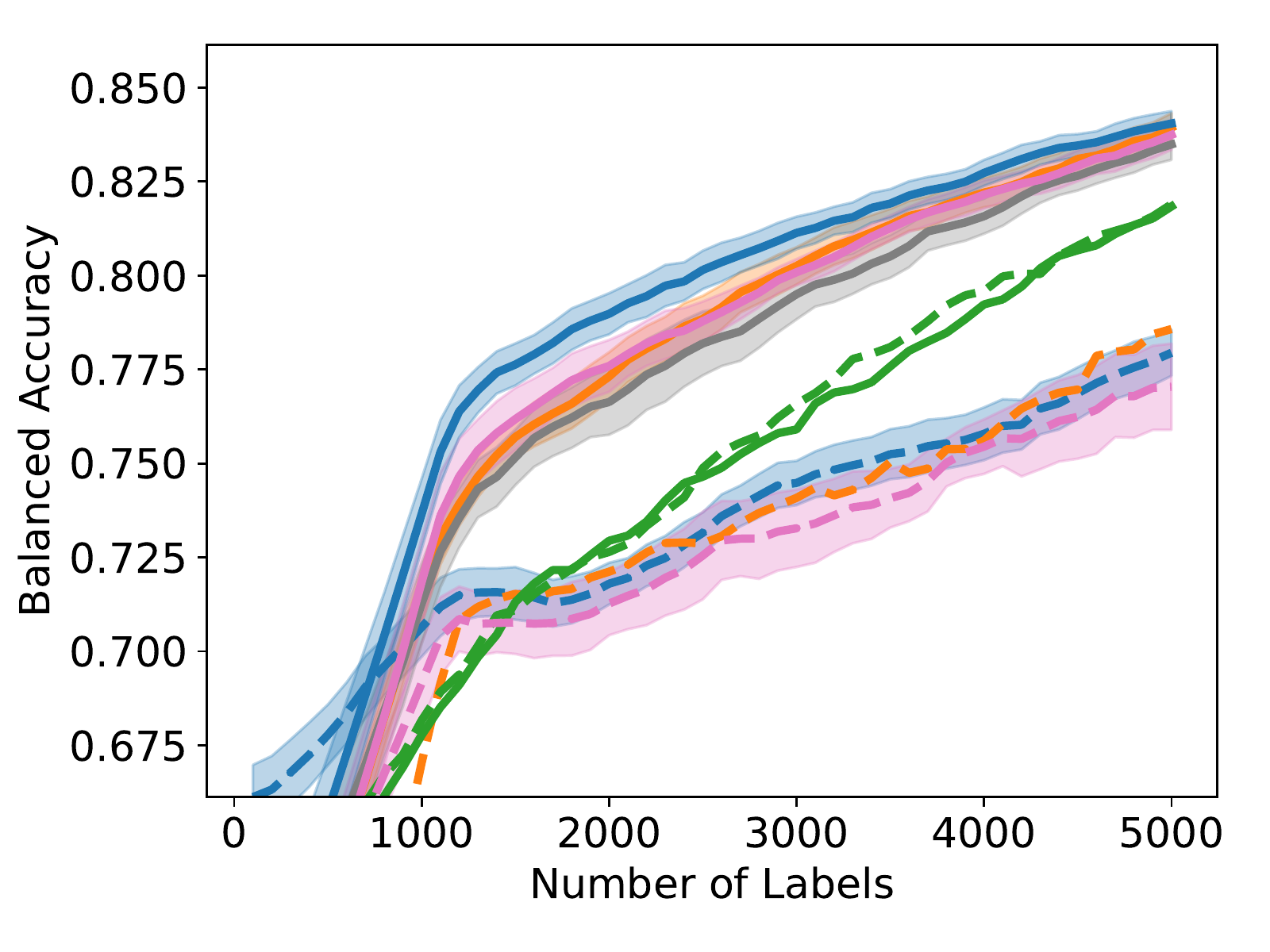}
        \caption{$ACC_{bal}$}
    \end{subfigure}
    \begin{subfigure}[t]{.49\textwidth}
        \centering
        \includegraphics[width=\textwidth]{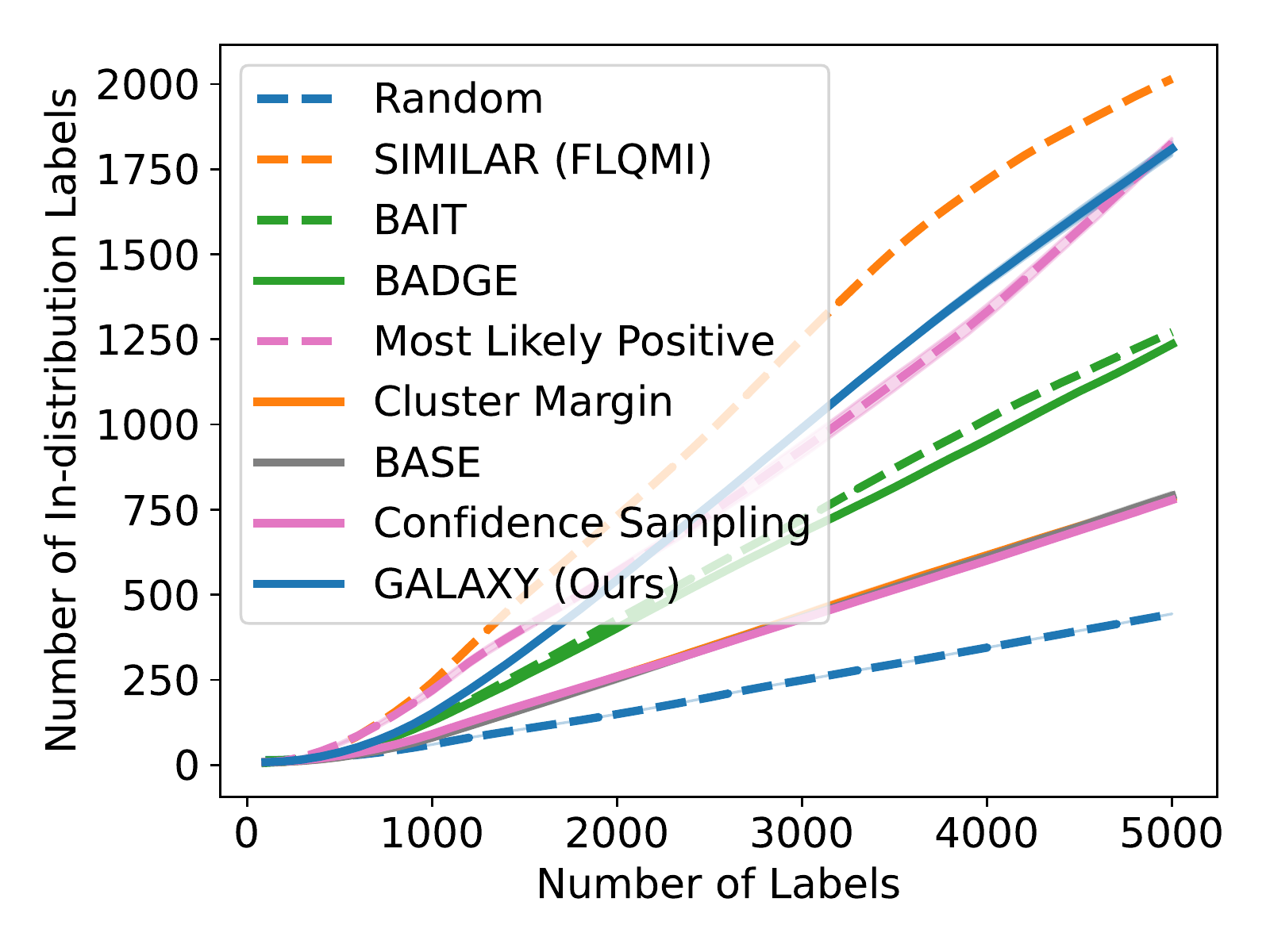}
        \caption{\#In-distribution Label}
    \end{subfigure}
    \caption{CIFAR-10, 2 classes}
\end{figure*}
\begin{figure*}[t!]
    \centering
    \begin{subfigure}[t]{.49\textwidth}
        \centering
        \includegraphics[width=\textwidth]{figure/cifar_unbalanced_3_accuracy.pdf}
        \caption{$ACC_{bal}$}
    \end{subfigure}
    \begin{subfigure}[t]{.49\textwidth}
        \centering
        \includegraphics[width=\textwidth]{figure/cifar_unbalanced_3_labels.pdf}
        \caption{\#In-distribution Label}
    \end{subfigure}
    \caption{CIFAR-10, 3 classes}
\end{figure*}
\begin{figure*}[t!]
    \centering
    \begin{subfigure}[t]{.49\textwidth}
        \centering
        \includegraphics[width=\textwidth]{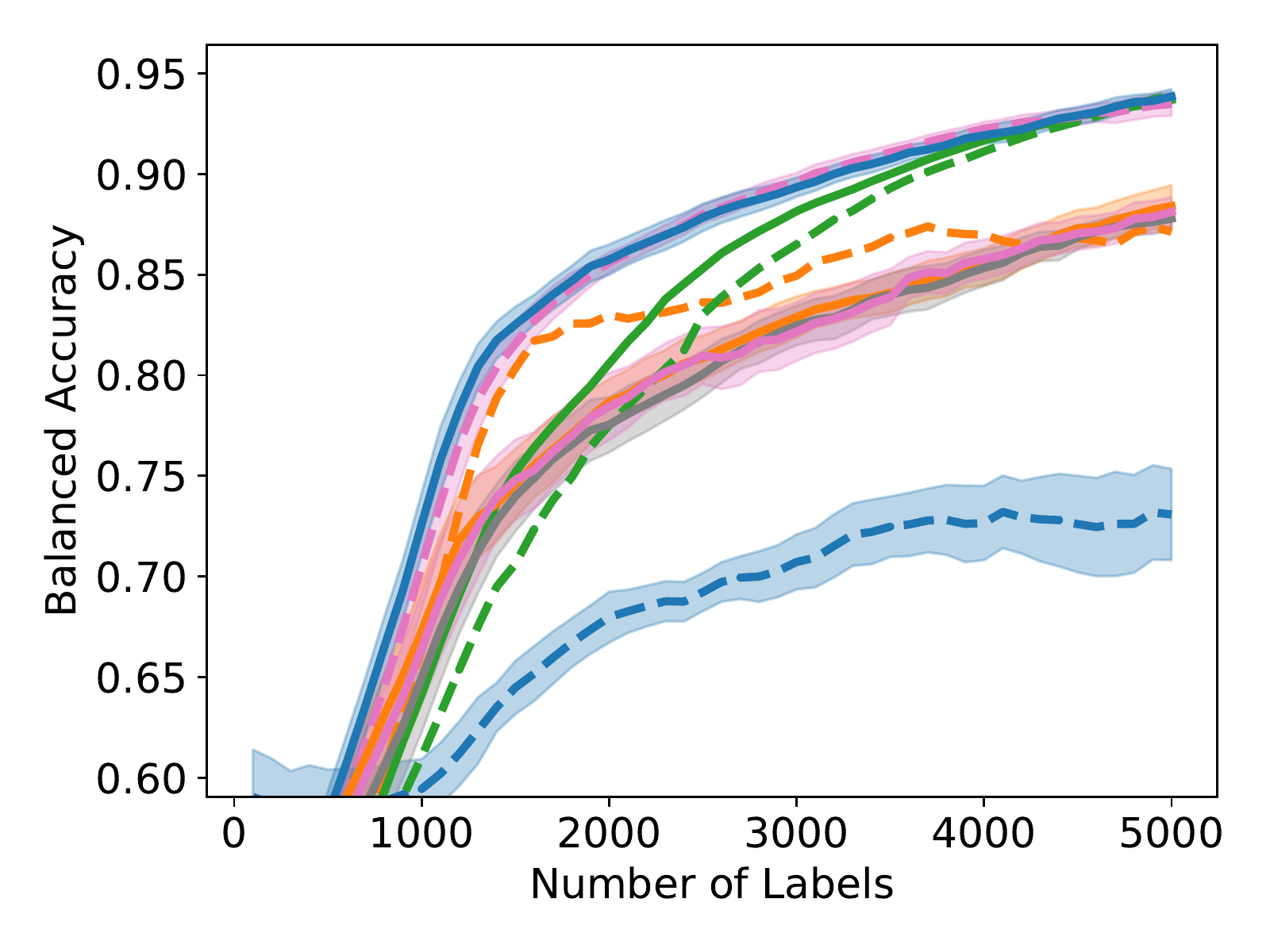}
        \caption{$ACC_{bal}$}
    \end{subfigure}
    \begin{subfigure}[t]{.49\textwidth}
        \centering
        \includegraphics[width=\textwidth]{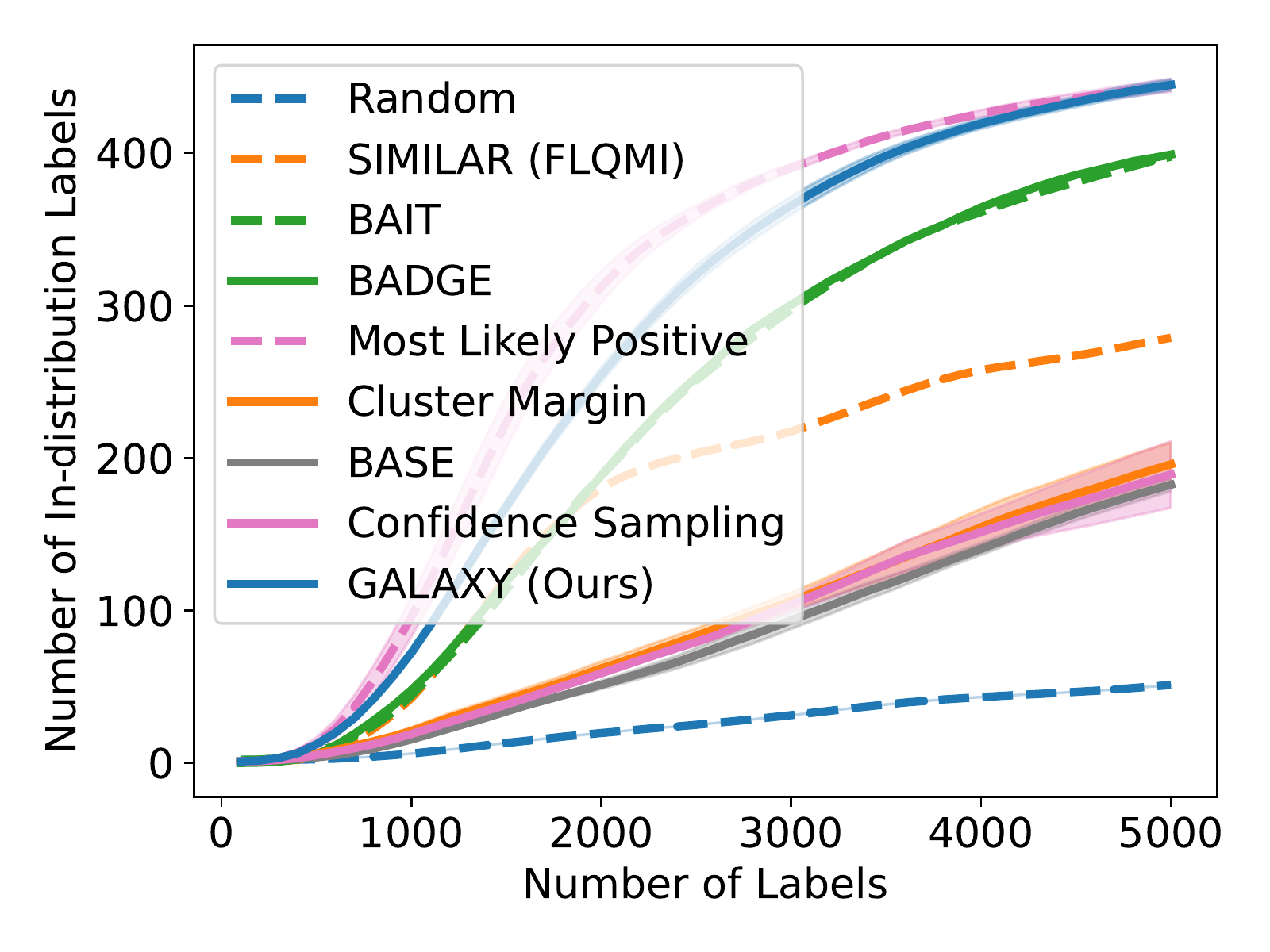}
        \caption{\#In-distribution Label}
    \end{subfigure}
    \caption{CIFAR-100, 2 classes}
\end{figure*}
\begin{figure*}[t!]
    \centering
    \begin{subfigure}[t]{.49\textwidth}
        \centering
        \includegraphics[width=\textwidth]{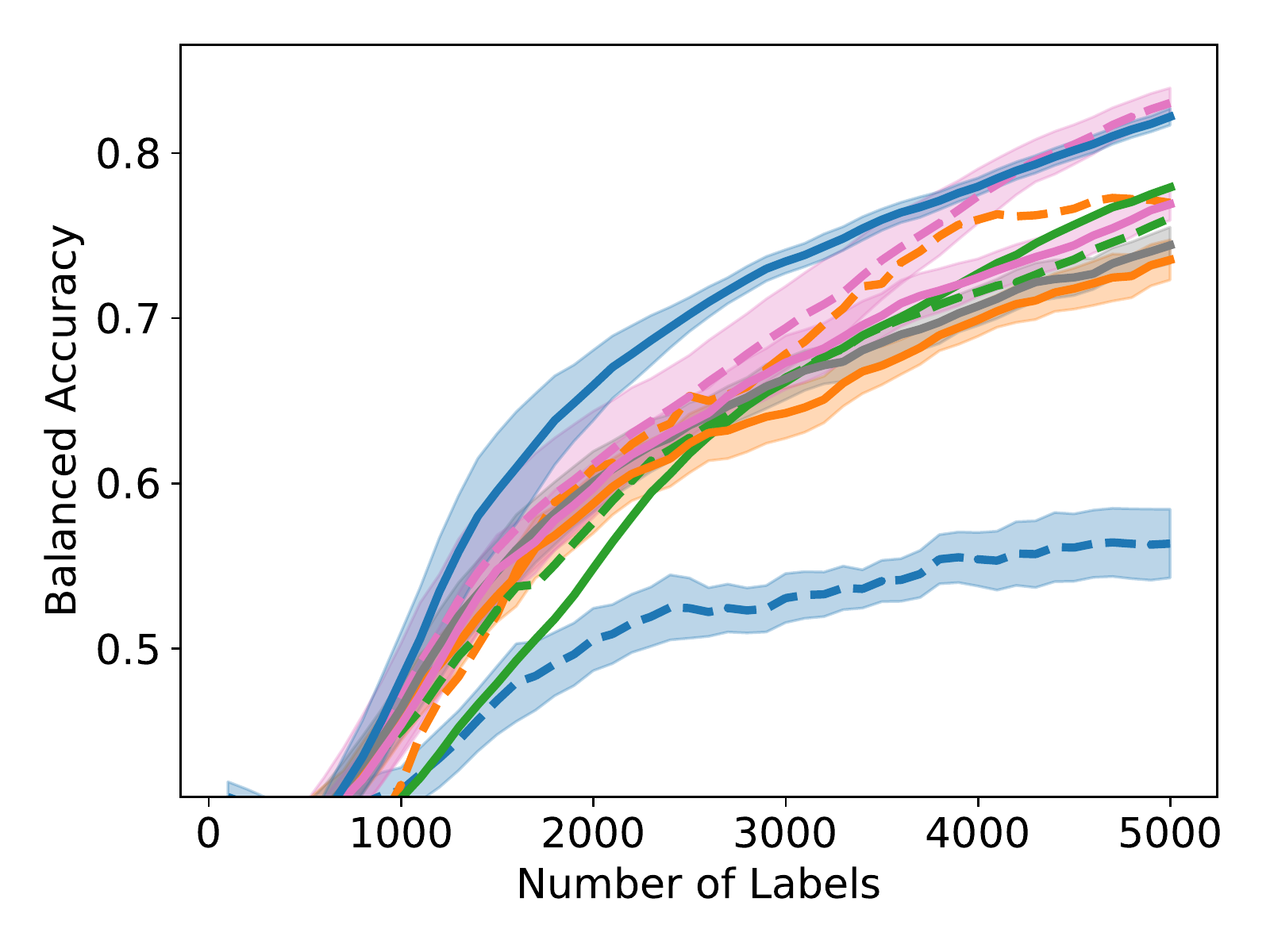}
        \caption{$ACC_{bal}$}
    \end{subfigure}
    \begin{subfigure}[t]{.49\textwidth}
        \centering
        \includegraphics[width=\textwidth]{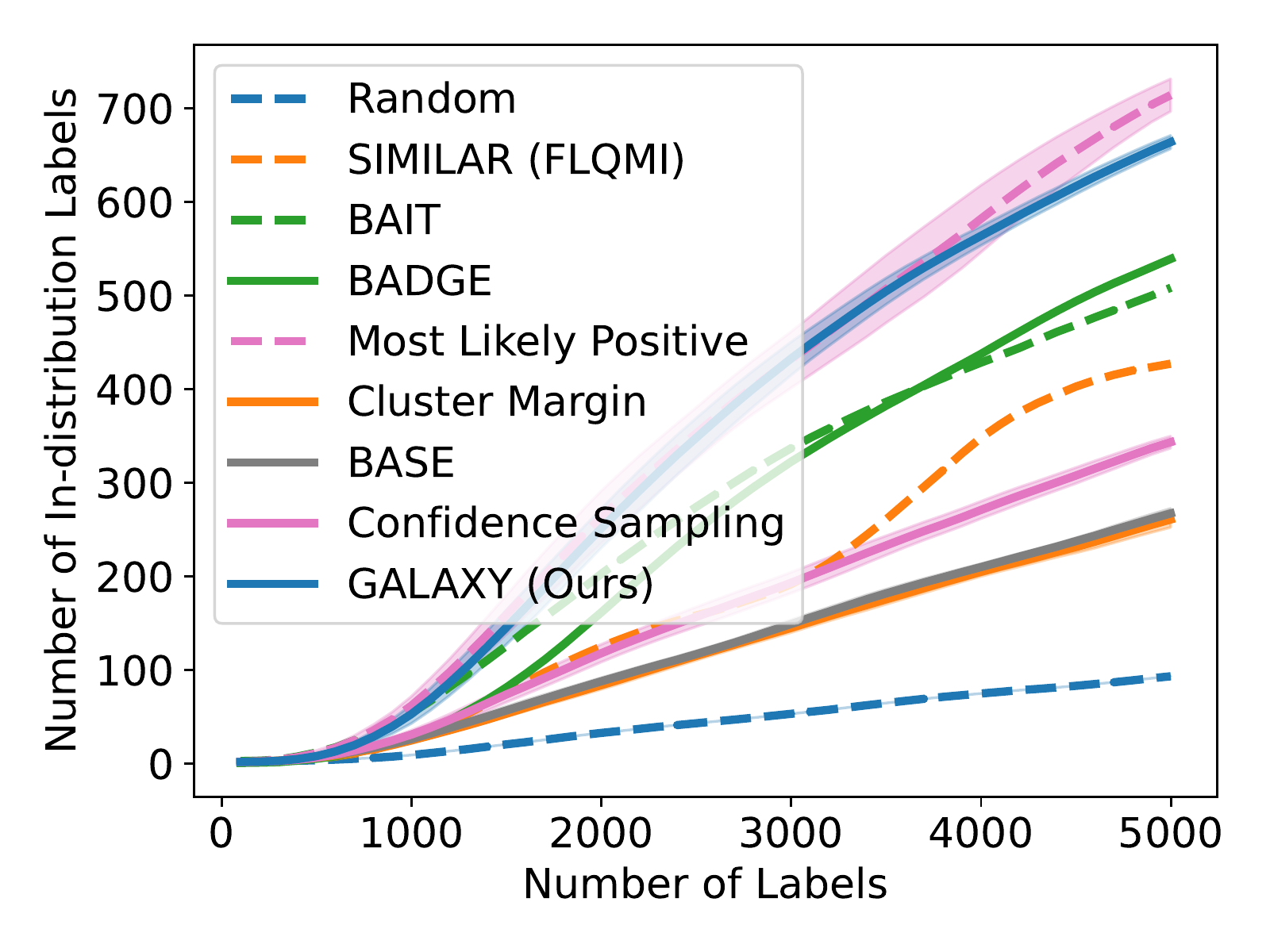}
        \caption{\#In-distribution Label}
    \end{subfigure}
    \caption{CIFAR-100, 3 classes}
\end{figure*}
\begin{figure*}[t!]
    \centering
    \begin{subfigure}[t]{.49\textwidth}
        \centering
        \includegraphics[width=\textwidth]{figure/cifar100_unbalanced_10_accuracy.pdf}
        \caption{$ACC_{bal}$}
    \end{subfigure}
    \begin{subfigure}[t]{.49\textwidth}
        \centering
        \includegraphics[width=\textwidth]{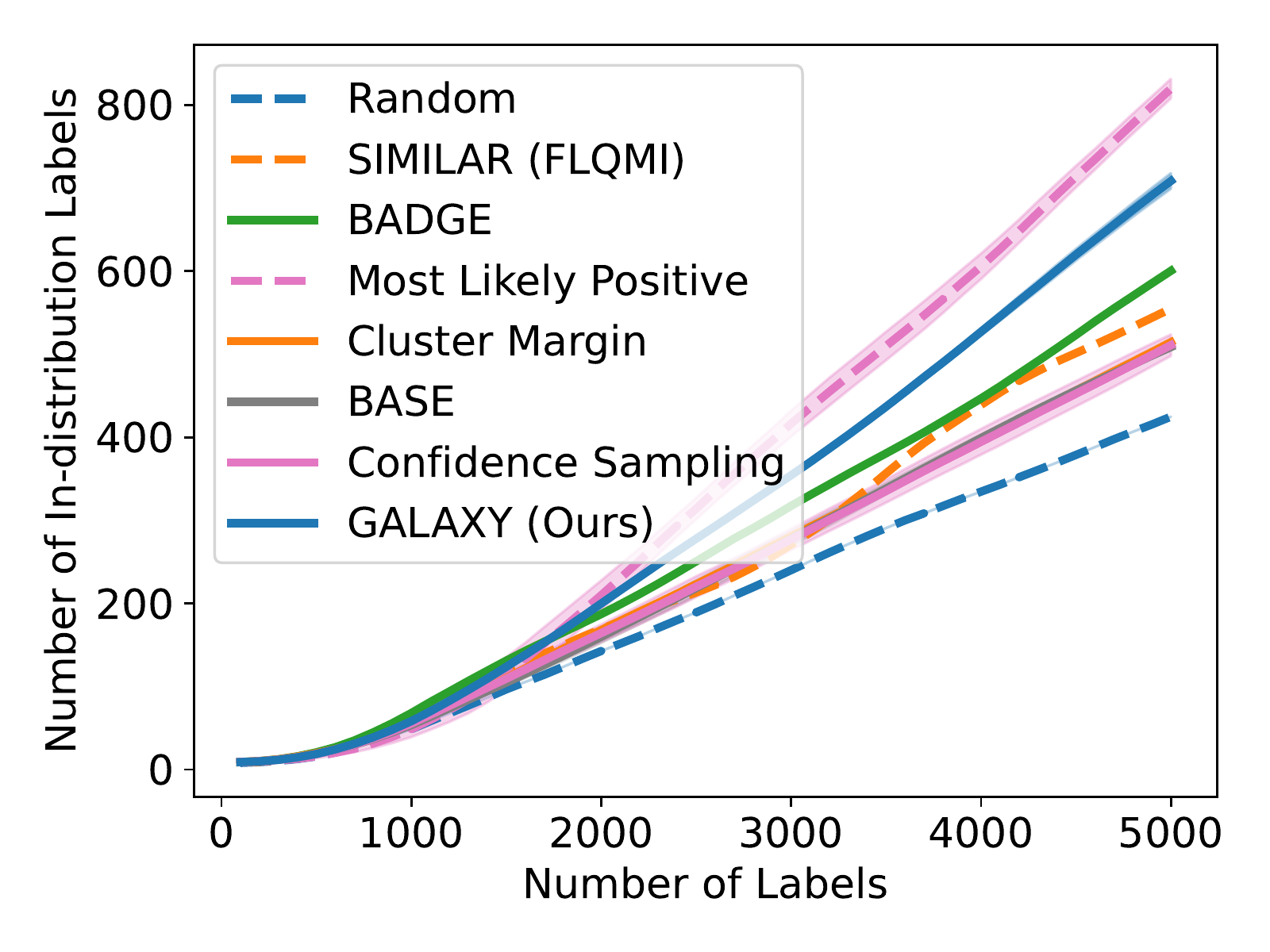}
        \caption{\#In-distribution Label}
    \end{subfigure}
    \caption{CIFAR-100, 10 classes}
\end{figure*}
\begin{figure*}[t!]
    \centering
    \begin{subfigure}[t]{.49\textwidth}
        \centering
        \includegraphics[width=\textwidth]{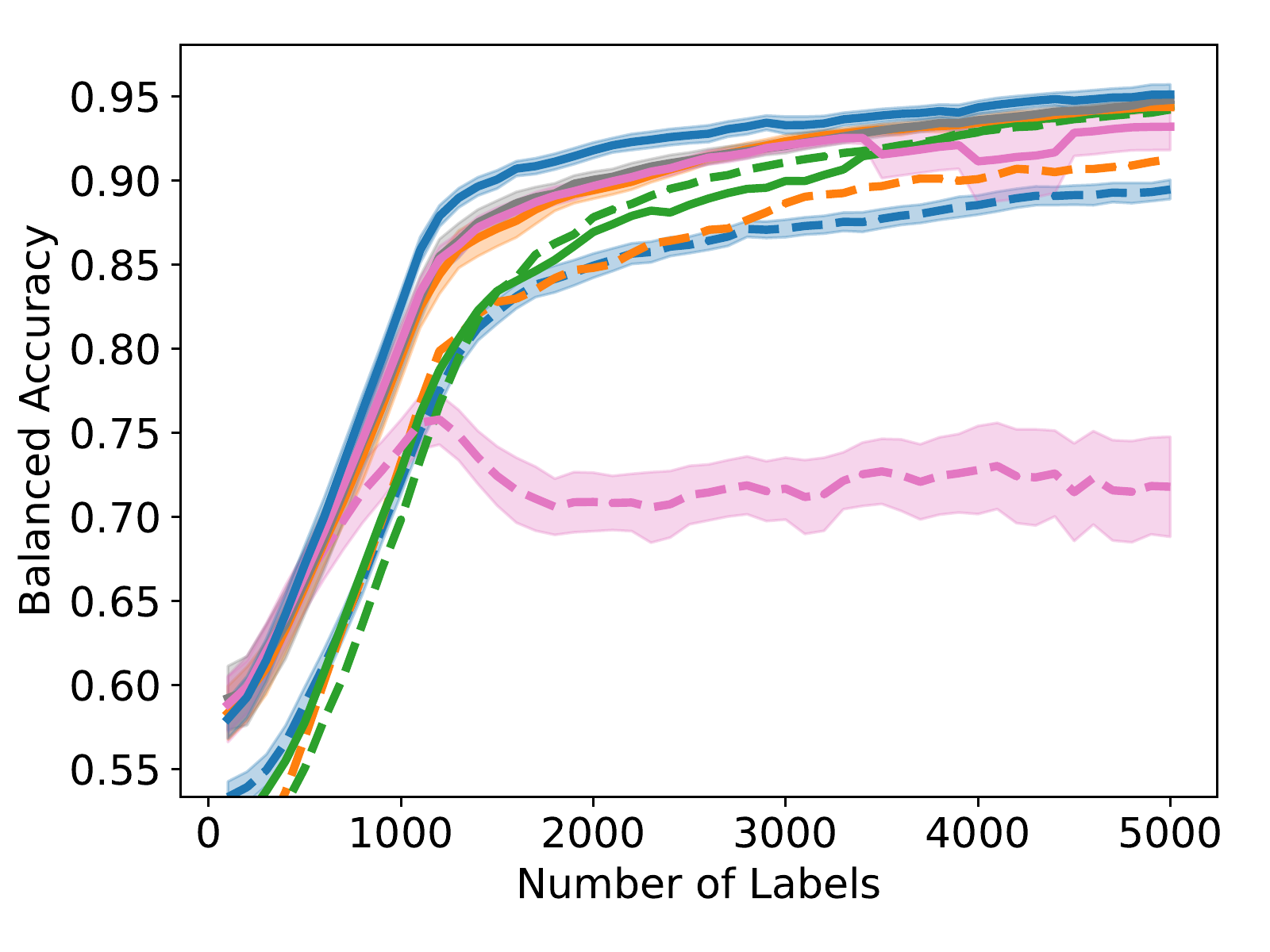}
        \caption{$ACC_{bal}$}
    \end{subfigure}
    \begin{subfigure}[t]{.49\textwidth}
        \centering
        \includegraphics[width=\textwidth]{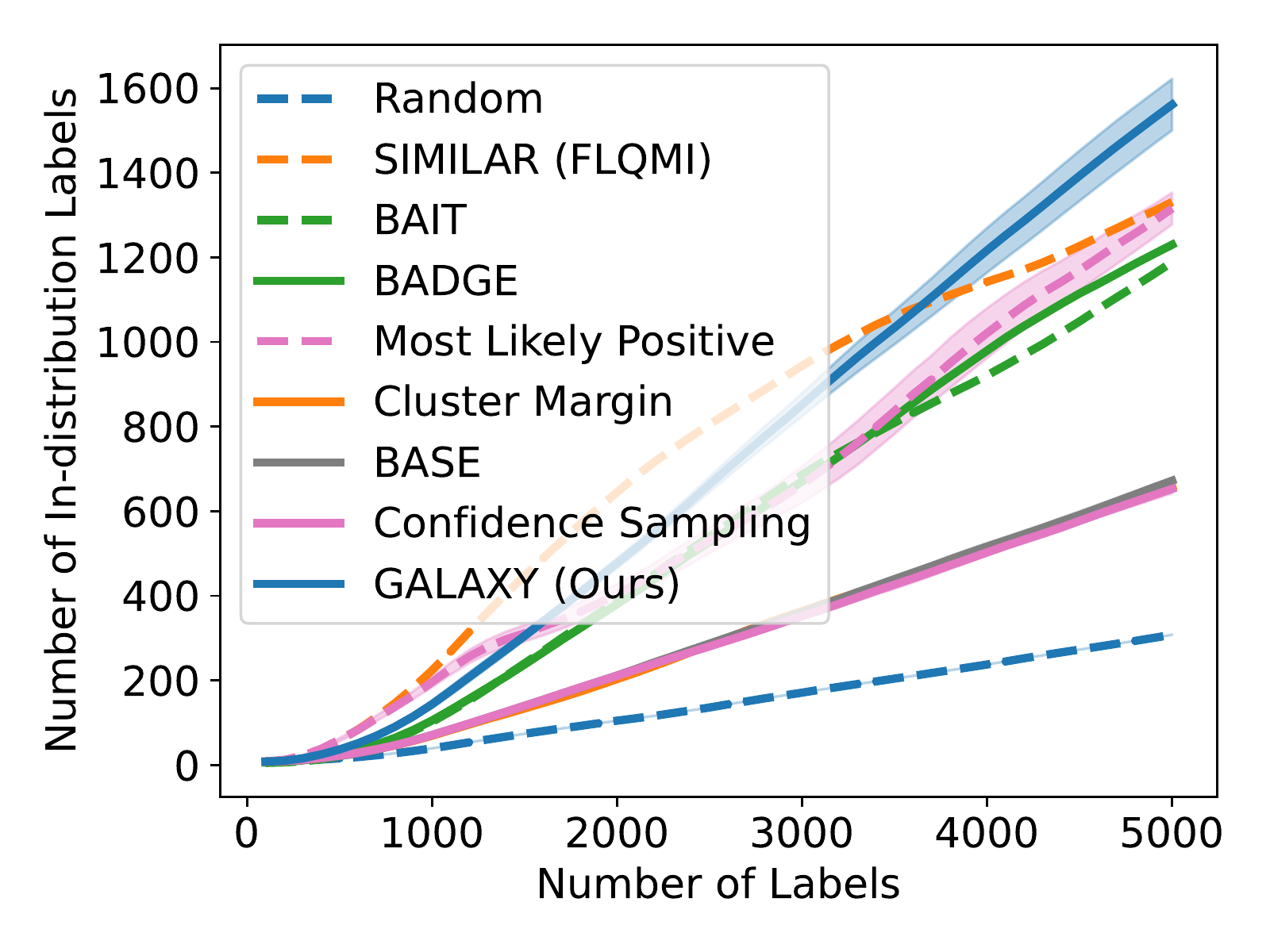}
        \caption{\#In-distribution Label}
    \end{subfigure}
    \caption{SVHN, 2 classes}
\end{figure*}
\begin{figure*}[t!]
    \centering
    \begin{subfigure}[t]{.49\textwidth}
        \centering
        \includegraphics[width=\textwidth]{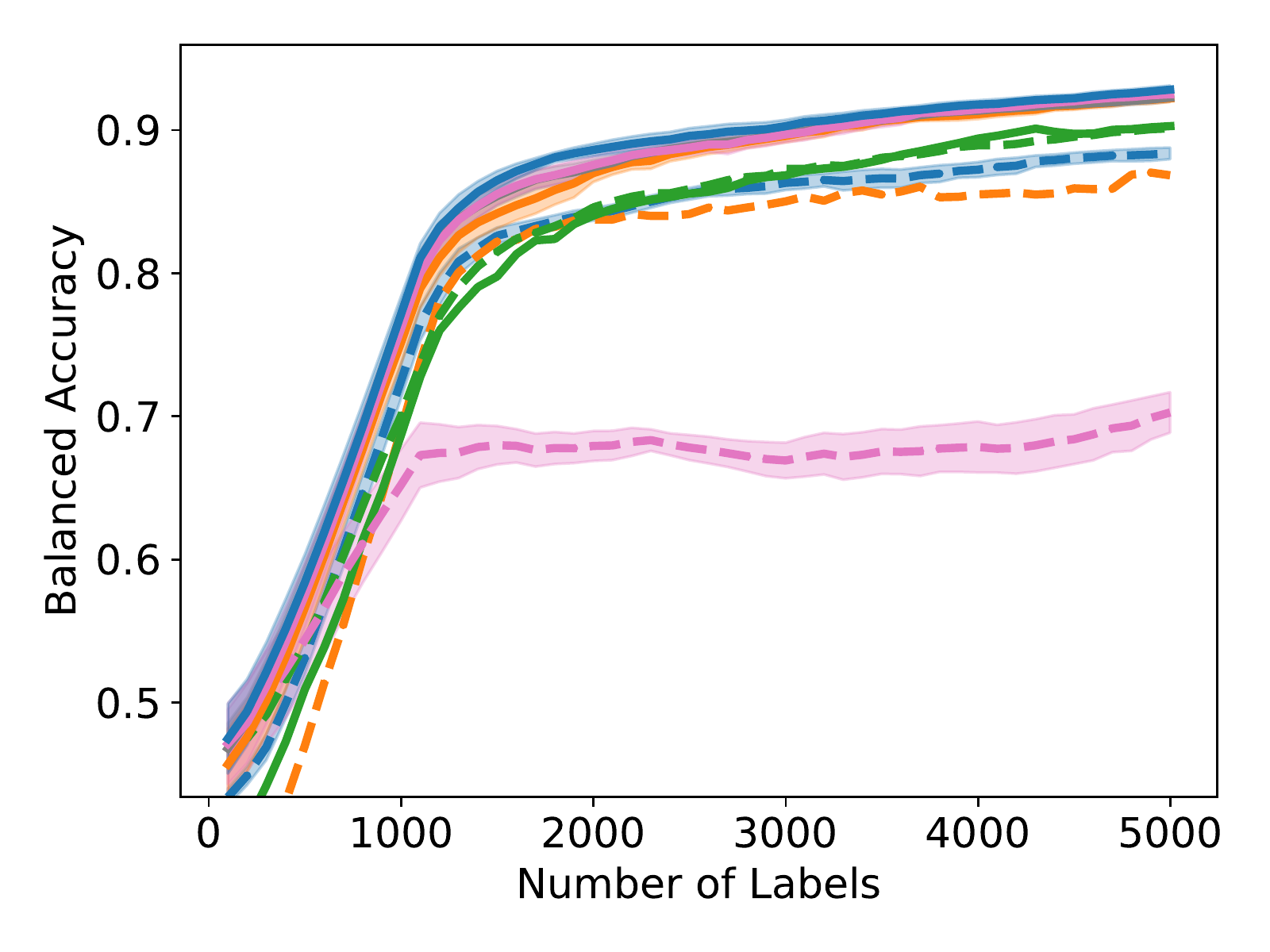}
        \caption{$ACC_{bal}$}
    \end{subfigure}
    \begin{subfigure}[t]{.49\textwidth}
        \centering
        \includegraphics[width=\textwidth]{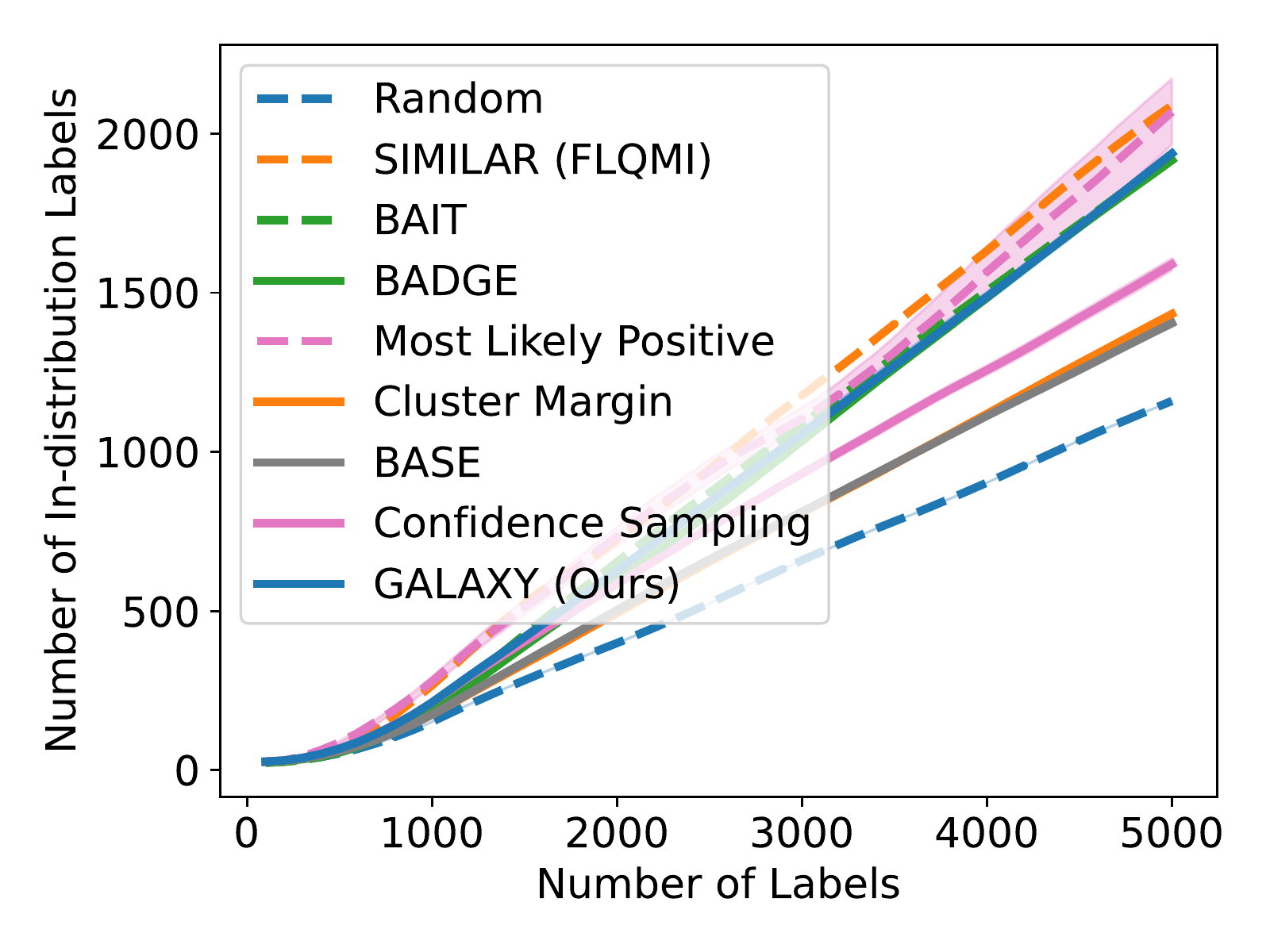}
        \caption{\#In-distribution Label}
    \end{subfigure}
    \caption{SVHN, 3 classes}
\end{figure*}

\begin{figure*}[t!]
    \centering
        \includegraphics[width=.5\textwidth]{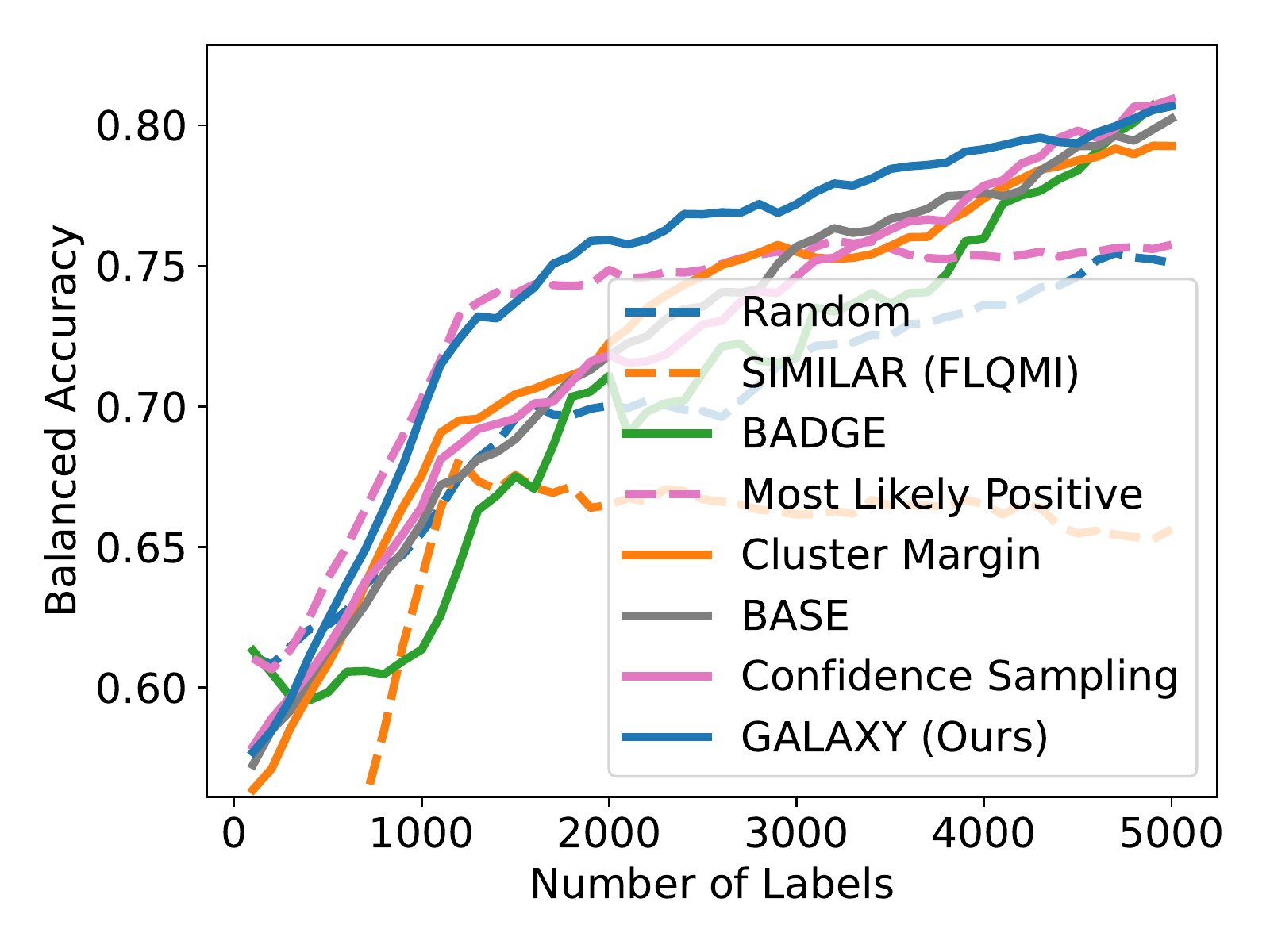}
    \caption{$ACC_{bal}$ for PathMNIST, $2$ classes. We only conduct $1$ run for each algorithm in this setting.}
\end{figure*}

\subsection{Large-budget Regime}
In Figure~\ref{fig:large_budget}, we use a batch size $1000$ and average over $3$ runs. We use a labelling budget of $30000$ out of the pool of size $50000$. Note that confidence sampling performs competitive in this case but could fail catastrophically in cases such as SVHN, 3 classes. 
\begin{figure}
    \centering
    \includegraphics[width=.5\linewidth]{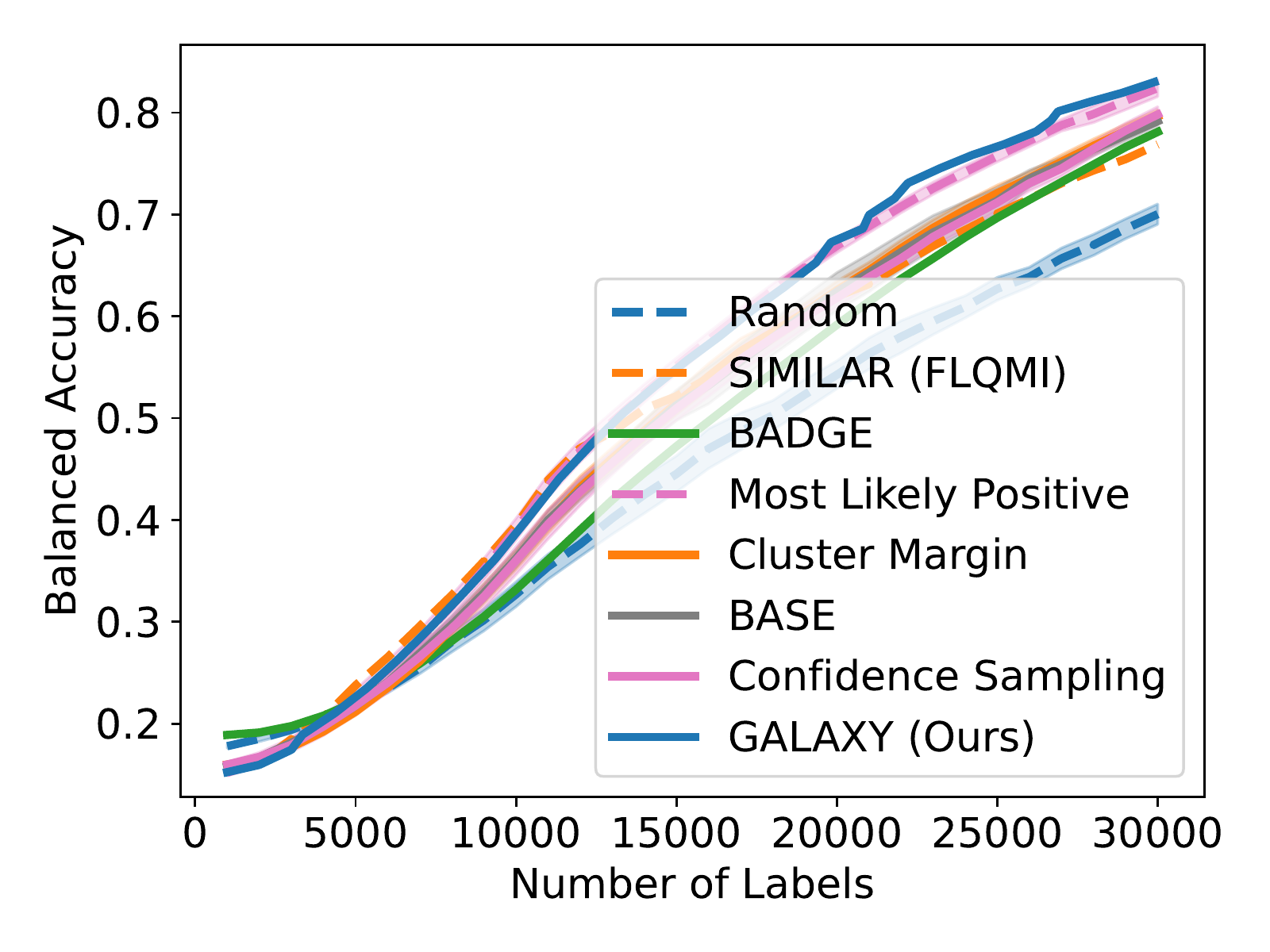}
    \caption{CIFAR-100, 10 classes, batch 1000}
    \label{fig:large_budget}
\end{figure}

\end{document}